\declaretheorem[numberwithin=section]{thm}
\declaretheorem[sibling=thm]{theorem}
\declaretheorem[sibling=thm]{lemma}
\declaretheorem[]{assumption}
\DeclareMathOperator*{\EV}{\mathbb{E}}
\DeclareMathOperator*{\argmax}{arg\,max}
\DeclareMathOperator*{\argmin}{arg\,min}
\newcommand{\wpi}{\widehat \pi}
\title{Blindfolded Experts Generalize Better: \\
Insights from Robotic Manipulation and Videogames}
\author{%
  Ev Zisselman\thanks{Correspondence E-mail: \texttt{ev\textunderscore zis@campus.technion.ac.il}}, ~Mirco Mutti, ~Shelly Francis-Meretzki, ~Elisei Shafer, ~Aviv Tamar\\
  Technion -- Israel Institute of Technology
}
\begin{document}

\allowdisplaybreaks

\maketitle

\begin{abstract}

Behavioral cloning is a simple yet effective technique for learning sequential decision-making from demonstrations. Recently, it has gained prominence as the core of foundation models for the physical world, where achieving generalization requires countless demonstrations of a multitude of tasks. Typically, a human expert with full information on the task demonstrates a (nearly) optimal behavior. In this paper, we propose to hide some of the task's information from the demonstrator. This ``blindfolded'' expert is compelled to employ non-trivial \textit{exploration} to solve the task. We show that cloning the blindfolded expert generalizes better to unseen tasks than its fully-informed counterpart. We conduct experiments of real-world robot peg insertion tasks with (limited) human demonstrations, alongside videogames from the Procgen benchmark. Additionally, we support our findings with theoretical analysis, which confirms that the generalization error scales with $\sqrt{I/m}$, where $I$ measures the amount of task information available to the demonstrator, and $m$ is the number of demonstrated tasks. Both theory and practice indicate that cloning blindfolded experts generalizes better with fewer demonstrated tasks. Project page with videos and code: \url{https://sites.google.com/view/blindfoldedexperts/home}.
\end{abstract}

\section{Introduction}

Behavioral cloning (BC) is a simple yet effective method for training policies in sequential decision-making problems~\citep{pomerleau1988alvinn,chi2023diffusion}. In BC, an expert demonstrates how to perform a task, and the sequence of observation-action data is input to a supervised learning algorithm for training a policy. 

A key question in BC is generalization---how many demonstrations are required to train an effective policy. For single tasks, a well-investigated challenge is compounding errors---small mistakes in the trained policy may lead to visit states that the expert did not visit, further increasing the prediction errors~\citep{ross2011reduction}. However, recent results show that using appropriate neural-network architectures, BC can learn to solve complex tasks even with a modest number of demonstrations~\citep{chi2023diffusion,zhao2023learning}, and these results are reinforced by recent theory~\citep{foster2024behavior}. For multiple tasks (or significant variations of a single task), on the other hand, BC still requires abundant data, and recent methods for mitigating data requirements include augmentations~\citep{mandlekar2023mimicgen}, simulation~\citep{torne2024reconciling}, and fine tuning foundation models trained on large scale demonstration data~\citep{o2024open,team2024octo,kim2024openvla}. In this work, we hence focus on generalization to task variations.

While various works study how to improve generalization via the BC algorithm~\citep{ross2011reduction}, the policy representation~\citep{chi2023diffusion}, and the data diversity~\citep{lin2024data}, one aspect that remains unexplored is \textit{the experts themselves}. Many tasks can be solved in various ways---can some behaviors generalize better than others? Recently, in the context of zero-shot reinforcement learning, Zisselman et al.~\citep{zisselman2023explore} showed that certain exploratory behaviors generalize better than goal-oriented, reward-maximizing behavior. Intuitively, since exploratory behavior is less goal-oriented, it is less dependent on any particular task instance and therefore, more likely to generalize to novel tasks. In this work we ask---can a similar principle be useful also for imitation learning?

\begin{figure}[t]
  \centering
   \includegraphics[trim={1.5cm 3.8cm 1.7cm 3.2cm},clip,scale=0.5]{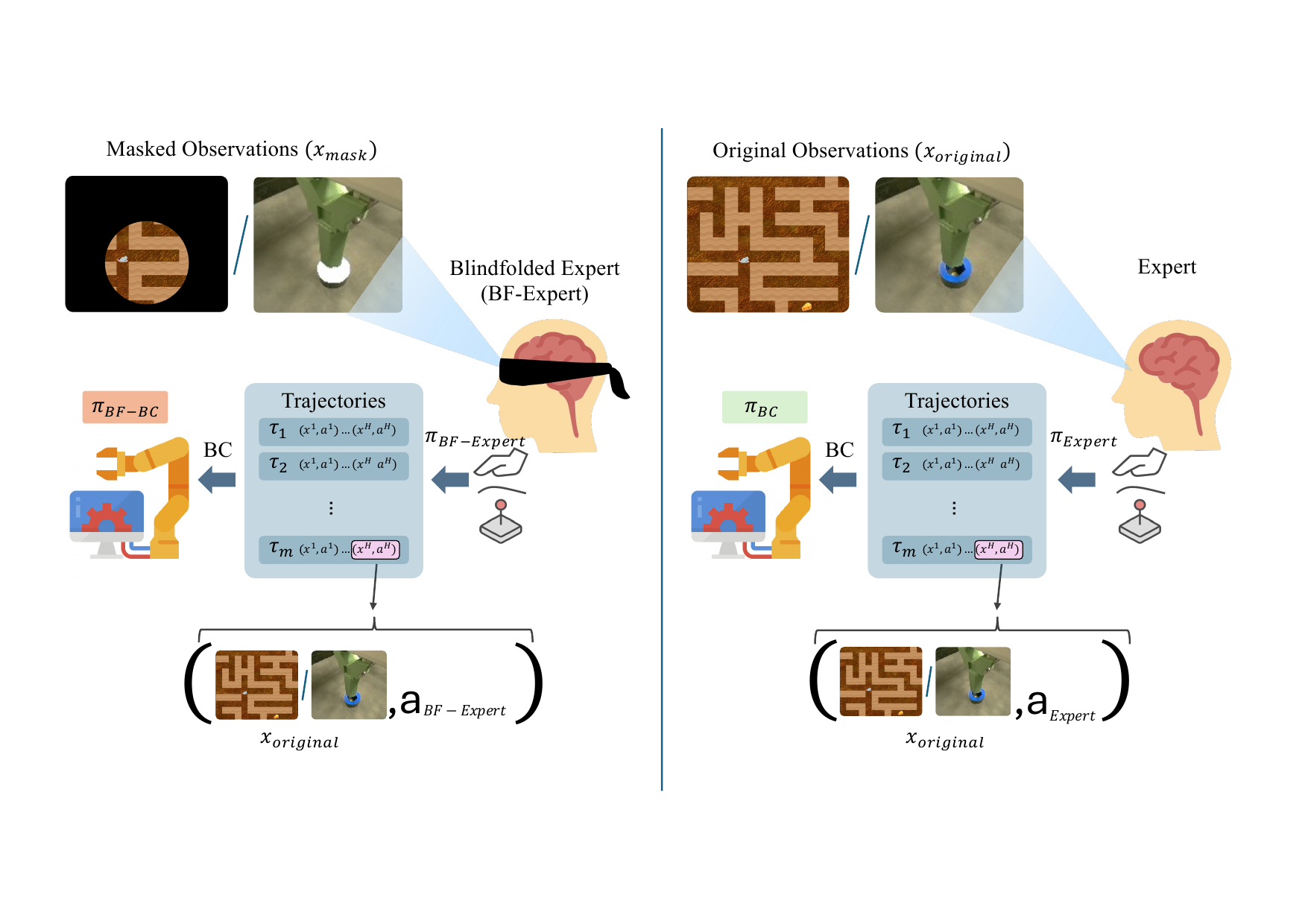}
   \vspace{-3pt}
  \caption{Illustration of the learning process. Note that the mask on observations only applies to the blindfolded expert, while the observations in the logged trajectories are unmasked in both cases.
  \label{fig:learning_process}
  }
  \vspace{-15pt}
\end{figure}

Our main idea (depicted in Figure~\ref{fig:learning_process}) is that by introducing a \textit{blindfold}---an information bottleneck on the expert's observation that makes it harder to identify the particular task, we can induce the expert to express a more exploratory and less task-dependent behavior, which we conjecture will generalize better. Importantly, \textit{our method does not change the observations used for training the policy} but only the expert's behavior, and is therefore compliant with any BC algorithm, and complementary to the methods for improving generalization mentioned above.
Since the expert's exploratory behavior is typically history dependent, our method makes use of policy architectures that can process a sequence of observations, such as recurrent neural networks or transformers~\citep{cho2014properties,vaswani2017attention}.

Theoretically, we prove an upper bound on the generalization that scales with $\mathcal{E}_{gen} + \sqrt{I/m}$, where $I$ measures the amount of task information available to the demonstrator, $m$ is the number of demonstrated tasks, and $\mathcal{E}_{gen}$ is a cost associated with the expert not taking optimal actions.
For domains where the exploratory behavior can still solve the task, $\mathcal{E}_{gen}$ is zero, and thus by lowering $I$ using a ``blindfold'' we reduce the generalization error without paying any price. To our knowledge, this result is the first of its kind in relating non-trivial properties of the expert's behavior to multi-task generalization of the resulting BC policy.

Empirically, we demonstrate our approach on simulated games from the Procgen suite~\cite{cobbe2020leveraging}---a standard benchmark for task generalization, and on a real-robot peg insertion task, based on the FMB challenge~\cite{luo2023fmb}, where the shapes of the peg and the hole define the task. For the Procgen games, our blindfold hides the observation and reveals only the agent's immediate surroundings to the expert. For the peg insertion domain, we let the expert teleoperate the robot by observing images from robot-mounted cameras, and mask out the hole shape from the image. In both domains, we find that the blindfold induces more exploratory behavior from the expert, which in turn yields significantly better generalization to different tasks.

Our results pave the way to a new and principled approach for collecting demonstrations, both for specific problems, and also for more general foundation-model scale endeavors.

\section{Problem formulation}
\label{sec:preliminaries}

Throughout the paper, we will focus on a \emph{multi-task} imitation learning setup in which we aim to clone an expert's behavior from demonstrations of a (small) selection of tasks with the goal to generalize the behavior to (many) unseen tasks. As we shall see both theoretically and empirically, the generalization is not only affected by the number of available demonstrations, but also by the information available to the expert when performing demonstrations.
First, let us introduce the setting formally.

\textbf{Setting.}~~
We consider a set of tasks $\Theta := \{ \theta_i \}_{i = 1}^M$ and a task distribution $P_0 \in \Delta (\Theta)$, where $\Delta (S)$ denotes the probability simplex over a set $S$. Each task $\theta \in \Theta$ is defined through a Markov Decision Process (MDP \cite{puterman2014markov}) $\mathcal{M}_{\theta} := (\mathcal{X}, \mathcal{A}, p_{\theta}, r_{\theta}, H)$, where $\mathcal{X}$, $\mathcal{A}$, $H$ respectively denote the observation space, the action space, and the horizon of an episode, which we assume common across all the tasks in $\Theta$.\footnote{Note that this does not hinder generality, as we can always take $\mathcal{X} = \cup_{\theta \in \Theta} \mathcal{X}_{\theta}$ when observation spaces vary across tasks (ditto for the action space) and $H = \max_{\theta \in \Theta} H_{\theta}$ when the episode horizons vary.} Instead, each task may have their own transition model $p_\theta : \mathcal{X} \times \mathcal{A} \to \Delta (\mathcal{X})$ and reward function $r_{\theta} : \mathcal{X} \times \mathcal{A} \to [0, 1]$. A history-based randomized policy is a sequence of functions $\pi := \{ \pi_h : \mathcal{T}_h \to \Delta(\mathcal{A})\}_{h = 0}^{H - 1}$ where $\mathcal{T}_h$ is the set of $h$-steps trajectories $\tau^h = (x^0, a^0, r^0, \ldots x^{h})$ and $\mathcal{T} = \cup_{h = 0}^{H - 1} \mathcal{T}_h$. A policy $\pi$ on the MDP $\mathcal{M}_\theta$ induces a distribution $\mathbb{P}^\pi_\theta$ over trajectories with the following process. An initial observation is sampled $x^0 \sim p_\theta (\cdot)$. Then, for every step $h \geq 0$, an action is sampled from the policy $a^h \sim \pi (\tau^h)$, the reward $r^h = r_\theta (x^h, a^h)$ is collected, and the MDP emits the next observation $x^{h + 1} \sim p_\theta (x^h, a^h)$. The process goes on until the step $H$ is reached.\footnote{Oftentimes, the episode horizon is an upper bound to the episode length, while secondary termination conditions may end the episode early, as it will be the case in our experimental setting. For the ease of presentation, we ignore early termination in our setup and consider episodes of length $H$.}
The Reinforcement Learning (RL \cite{sutton1998reinforcement}) objective for an MDP $\mathcal{M}_{\theta}$ is the cumulative sum of rewards $J(\pi) := \EV_{\mathbb{P}^\pi_{\theta}}[\sum_{h = 0}^{H - 1} r^h]$, where the sequence $(r^0, \ldots r^{H - 1})$ is taken on expectation over trajectories $\tau \sim \mathbb{P}^{\pi}_{\theta}$. An optimal policy for $\mathcal{M}_\theta$ is denoted as $\pi^* \in \argmax J(\pi)$.
For some $R \in \mathbb{N}$, we assume $J(\pi^*) \leq R$, where typically $R = 1$ when rewards are sparse, as large as $H$ when rewards are dense.

\textbf{Behavioral cloning.}~~
In the setting described above, we assume to have access to a dataset of expert demonstrations $E = \{ \theta_i \sim P_0, (\tau_{i1}, \ldots \tau_{in}) \sim \mathbb{P}^{\pi^E}_{\theta_i} \}_{i = 1}^m$ where $\tau_{ij} = (x_{ij}^0, a_{ij}^0, r_{ij}^0 \ldots x_{ij}^H, a_{ij}^H, r_{ij}^H)$ is a $H$-steps trajectory sampled independently from a policy $\pi^E$ in the MDP $\theta_i$. Thus, the total number of trajectories is $|E| = mn$ and the total number of transitions is $mnH$. With the available data, we aim to clone the expert's behavior $\pi^E$, a problem that is known as \emph{behavioral cloning}~\cite{ross2010efficient}. The idea is to train a policy $\wpi$ to mimic the expert's policy $\pi^E$ by minimizing a supervised learning loss on the demonstrations. While several choice of loss functions could be made~\cite{wang2022comprehensive}, here we opt for the negative log likelihood as in~\cite{foster2024behavior}. The behavioral cloning problem is then
\begin{equation}
    \wpi \in \argmin_{\pi \in \Pi} \; \mathcal{L} (\pi) := \sum_{i = 1}^m \sum_{j = 1}^n \sum_{h = 0}^{H - 1} \log \bigg(\frac{1}{\pi (a_{ij}^h | \tau_{ij}^h)} \bigg)
    \label{eq:bc_optimization_problem}
\end{equation}
where $\Pi$ is a policy space of our choice and $\tau^{h}_{ij}$ is $h$-steps chunk of the trajectories $\tau_{ij}$ in the dataset of demonstrations $E$, $a_{ij}^h$ is the action taken at step $h$ in $\tau_{ij}$.
While a sufficiently expressive policy space $\Pi$ may allow for a cloned policy $\wpi$ that closely approximates the expert on the training data $\mathcal{L} (\wpi) \approx 0$, we typically aim for a policy $\wpi$ that can mimic the expert's behavior on unseen data as well. Differently from the common setting~\citep{ross2010efficient, ross2014reinforcement, xu2020error, rajaraman2020toward, rajaraman2021value, rajaraman2021provably, foster2024behavior}, here we are not only concerned with \emph{generalization} across unseen observations in $\mathcal{X}$, but also across unseen tasks in $\Theta$. Before proceeding with the study of generalization in the next section, we introduce additional notation for later use.

\textbf{Additional notation.}~~
In our behavioral cloning problem \eqref{eq:bc_optimization_problem}, a single data point is given by the triplet $(\theta_i, \tau_{ij}^h, a^h_{ij})$, which we intend as realizations from the random variables $(T, X, A)$ 
distributed as $T \sim P_0$ and $(X, A) \sim \mathbb{P}^{\pi^E}_{T}$ respectively. We will turn to one or the other notation when convenient. 
For a random variable $A$ taking values $a_1, a_2, \ldots$ with probabilities $ p (a_1), p (a_2), \ldots $, we denote its \emph{entropy} $H (A) = - \sum_{i} p(a_i) \log p(a_i)$. For two random variables $A, B$, we denote their \emph{mutual information} $I_{A;B} = H(A) - H(A|B) = H(B) - H(B|A)$, where $H(A|B)$ is the conditional entropy. 
Finally, we will use the symbol $\lesssim$ to hide constant and lower order terms from inequalities.

\section{Generalization analysis}
\label{sec:generalization}

In the previous section, we detailed how an expert's policy can be ``cloned'' from data by solving the optimization problem~\eqref{eq:bc_optimization_problem}. Obviously, fully cloning the expert's behavior is a far fetched objective when limited demonstrations are available: When training data spans only a small portion of the observation space $\mathcal{X}$ and the set of tasks $\Theta$, how can we extract information on what would the expert do in unseen observations and tasks? Nonetheless, we aim for our cloned policy $\wpi$ to \emph{transfer} at least part of the expert's behavior beyond the demonstrated observations and tasks. In this section, we provide a formal study of the \emph{generalization} guarantees of the cloned policy $\wpi$, showing an original dependence with the \emph{information} available to the expert when collecting demonstrations. To specify what do we mean by ``information'' in this setting, let us consider the figure below.

\begin{figure}[h]
    \centering
    \includegraphics[width=0.75\linewidth]{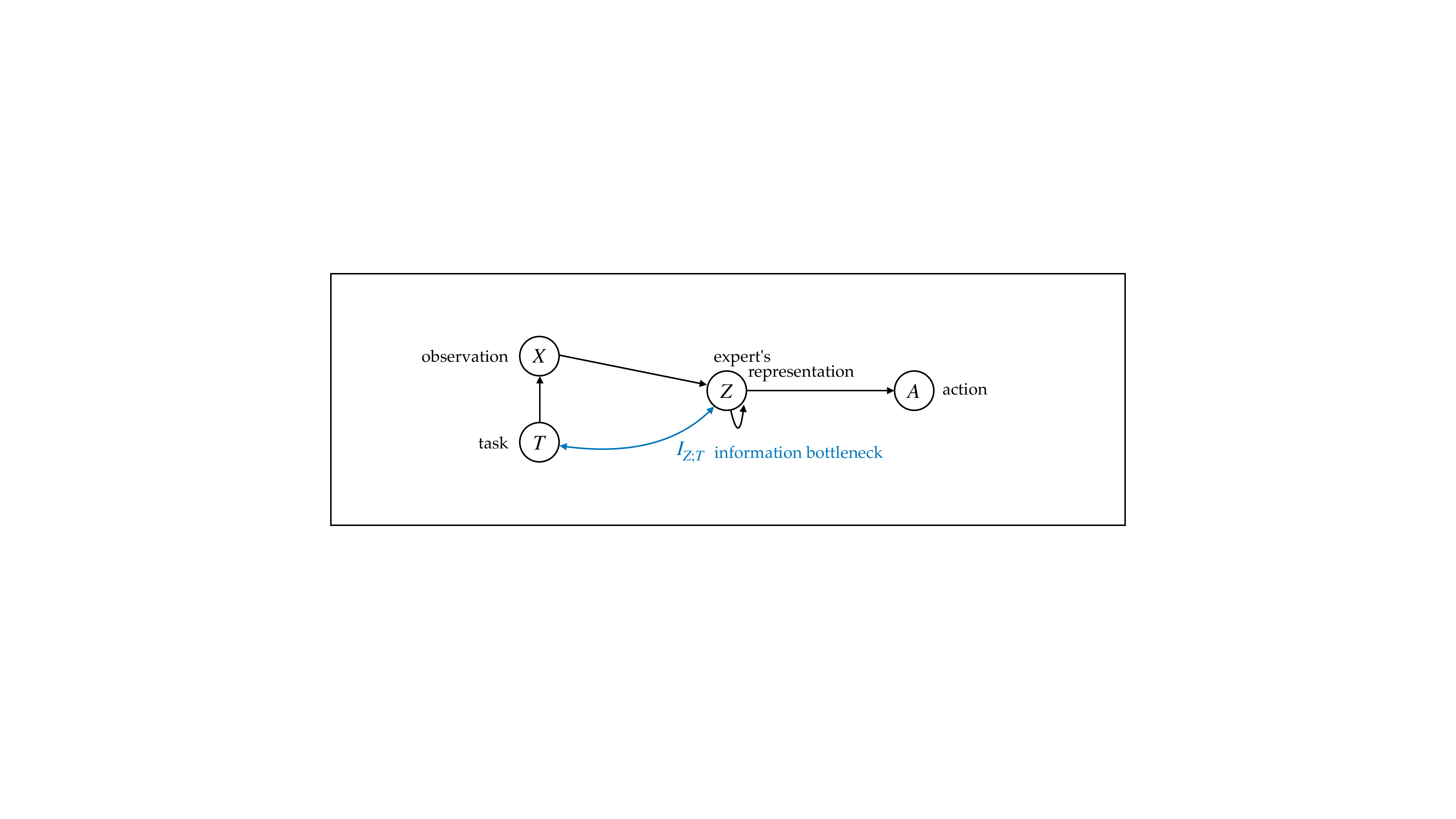}
\end{figure}

The latter is a graphical illustration of the expert's behavior.\footnote{Note that this is not related to the architecture of the cloned policy, which will be discussed later on.} At each step, the expert takes as input an observation ($X$) which depends on the task they are going to demonstrate ($T$), processing them into an internal representation $Z$. The demonstrated action ($A$) is then conditioned on $Z$. Note that $Z$ is recurrent and retains any information that is relevant to select $A$, including the task information that may be available in the observation $X$. 
The mutual information $I_{Z;T}$ measures the task-related information that goes into $Z$ and, consequently, how much the strategy to select $A$ relies on it.

We typically expect an expert with full information on the task---large $I_{Z;T}$---to demonstrate an optimal policy specific to the task, without any exploratory actions. Instead, whenever there is an information bottleneck between the task information and the expert---small $I_{Z;T}$---we expect them to take exploratory actions to first ``understand'' the task in order to solve it. We still call them \emph{experts} as we assume their behavior to be optimal \emph{with the given information}, a concept that has been formalized with Bayes-optimal policies~\citep{ghavamzadeh2015bayesian}.
We conjecture that the latter behavior may generalize better to new tasks, as the process of understanding the task is more general than just solving it. In the following, we provide a formal result based on this conjecture, in which we analyze the generalization gap of the cloned policy as a function of the information $I_{Z;T}$ available to the demonstrator.


Analogously to previous works \citep[e.g.,][]{ross2010efficient, foster2024behavior}, we are interested in deriving an upper bound on the performance gap between an optimal policy for each task ($\pi^*$) and the cloned policy ($\wpi$). Since we are considering a multi-task setting, we average the gap across the task distribution $P_0$,\footnote{Note that the optimal policy $\pi^*$ depends on the task $T$ whereas $\wpi$ is a single policy cloned from data.}
\begin{equation}
    \EV_{T \sim P_0} [J(\pi^*_T) - J (\wpi)] =  \EV_{T \sim P_0} \bigg[ \EV_{\mathbb{P}_{T}^{\pi^*}} \bigg[\sum\nolimits_{h = 0}^{H - 1} r^h \bigg] - \EV_{\mathbb{P}_{T}^{\wpi}} \bigg[\sum\nolimits_{h = 0}^{H - 1} r^h \bigg] \bigg].
\end{equation}
Before stating the result, we introduce a few technical assumptions. First, we define the \emph{generalization error} of a policy $\pi$ as
\begin{equation}
    \mathcal{E}_{gen} (\pi) := \EV_{T \sim P_0} \EV_{XA \sim \mathbb{P}^{ \pi^*}_{T}} [\bm{1} (\pi(X) \neq A)]
    \label{eq:gen_error}
\end{equation}
for a single point indicator loss. We make the following assumptions on the expert's policy.
\begin{assumption}
    The expert's policy $\pi^E$ is deterministic.
    \label{asm:expert_deterministic}
\end{assumption}
\begin{assumption}
    The generalization error of the expert's policy is given by $\mathcal{E}_{gen} (\pi^E)$.
    \label{asm:expert_suboptimality}
\end{assumption}
Note that we allow the expert's policy to depend on the history, for which assuming determinism is reasonable even when an information bottleneck is applied to the expert.
Whereas it is standard in the literature to assume the expert is optimal, i.e., $\mathcal{E}_{gen} (\pi^E) = 0$, in the presence of an information bottleneck we do not take for granted that the expert is optimal \emph{in all the tasks}. Nonetheless, if the expert's behavior is Bayes-optimal, non-trivial worst-case bounds on $\mathcal{E}_{gen} (\pi^E)$ hold~\citep{chen2022understanding}, for which the generalization error only scales with $\log (H)$ under our assumptions. Moreover, in settings where the reward is sparse denoting task success, i.e., $R = 1$, the Bayes-optimal policy may still have $\mathcal{E}_{gen} (\pi^E) = 0$ w.r.t. some optimal policy, albeit inefficient in the number of steps.

Then, regarding the behavioral cloning problem~\eqref{eq:bc_optimization_problem}, we make a pair of assumptions as follows.
\begin{assumption}
    The expert's policy is {\em realizable} in the policy space $\Pi$, i.e., $\pi^E \in \Pi$.
    \label{asm:realizability}
\end{assumption}
\begin{assumption}
\label{asm:optimization_error}
    We have access to an optimization oracle that solves problem \eqref{eq:bc_optimization_problem} with bounded error
    \begin{equation*}
        \mathcal{E}_{opt} (\wpi) := \frac{1}{mnH} \sum_{i = 1}^m \sum_{j = 1}^n \sum_{h = 0}^{H - 1} \bm{1} (\wpi(\tau_{ij}^h) \neq a_{ij}^h).
    \end{equation*}
\end{assumption}
In principle, one can fulfill Asm.~\ref{asm:realizability} by cloning the demonstrations into a rich enough policy space $\Pi$. However, a more expressive policy space may lead to a harder optimization problem, especially when the policies are represented through large neural networks, for which \eqref{eq:bc_optimization_problem} is non-convex.

We now have all the ingredients to state our main result.
\begin{theorem}
    \label{thr:generalization}
    For a confidence $\delta \in (0, 1)$, it holds with probability at least $1 - 2\delta$
    \begin{equation*}
        \EV_{T \sim P_0} [J (\pi^*_T) - J (\wpi)] \lesssim RH \left( \mathcal{E}_{gen} (\pi^E) + \mathcal{E}_{opt} (\wpi) + \sqrt{\frac{ I_{T; Z} |\mathcal{A}| \log (|\mathcal{A}| / \delta)}{m}} + \frac{\log (|\Pi| m / \delta)}{n} \right).
    \end{equation*}
\end{theorem}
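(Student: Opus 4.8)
The plan is to first convert the return gap into a classification (disagreement) error between the cloned and the reference policies, and then to control that error by a two-level generalization argument: one level over the $n$ trajectories demonstrated within each task, and one level over the $m$ tasks themselves. For Step~1 I would split
\[
  \EV_{T\sim P_0}\big[J(\pi^*_T) - J(\wpi)\big] = \EV_{T}\big[J(\pi^*_T) - J(\pi^E)\big] + \EV_{T}\big[J(\pi^E) - J(\wpi)\big],
\]
and handle each bracket by a coupling argument between the two trajectory laws involved. Since $\pi^E$ is deterministic (Asm.~\ref{asm:expert_deterministic}), and $\pi^*$ and $\wpi$ may be taken deterministic without loss, coupling the two laws step by step shows that their total-variation distance is at most $\sum_{h=0}^{H-1}$ of the probability that the \emph{first} disagreement occurs at step $h$, which is in turn at most the one-step disagreement probability under the reference policy's own trajectory distribution. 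Multiplying the resulting $H$-term sum by the largest attainable return --- which is $R$ when rewards are sparse and at most $H$ in general, hence $\le RH$ once the extra factor $H$ from the step count is included --- gives
\[
  \EV_{T}\big[J(\pi^*_T) - J(\wpi)\big] \;\lesssim\; RH\Big(\mathcal{E}_{gen}(\pi^E) + \mathcal{E}_{clone}\Big), \qquad \mathcal{E}_{clone} := \EV_{T\sim P_0}\,\EV_{XA\sim\mathbb{P}^{\pi^E}_T}\big[\bm{1}(\wpi(X)\neq A)\big],
\]
where the first term is exactly $\mathcal{E}_{gen}(\pi^E)$ of \eqref{eq:gen_error}, because the coupling for $J(\pi^*_T)-J(\pi^E)$ is run under $\mathbb{P}^{\pi^*}_T$; crucially, determinism is what prevents the usual compounding of errors here.

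For Step~2 (within-task generalization), fix a task $\theta_i$. The one-step disagreement is a bounded loss and the $n$ demonstrated trajectories are i.i.d.\ under $\mathbb{P}^{\pi^E}_{\theta_i}$, so a union bound over the finite policy class $\Pi$ and over the $m$ tasks, together with a realizable/Bernstein-type tail (using Asm.~\ref{asm:realizability} so that $\pi^E\in\Pi$ incurs zero loss), yields that with high probability, for every $\pi\in\Pi$ and every $i$,
\[
  \EV_{XA\sim\mathbb{P}^{\pi^E}_{\theta_i}}\big[\bm{1}(\pi(X)\neq A)\big] \;\lesssim\; \widehat g_i(\pi) + \frac{\log(|\Pi|\,m/\delta)}{n},
\]
where $\widehat g_i$ is the empirical disagreement on task $i$. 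Averaging over $i$ and invoking the optimization oracle (Asm.~\ref{asm:optimization_error}) turns the right-hand side into $\mathcal{E}_{opt}(\wpi) + \log(|\Pi|m/\delta)/n$. The denominator is $n$, not $nH$, because the transitions inside a single trajectory are statistically dependent, so the effective per-task sample size is the number of trajectories.

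For Step~3 (the crux) it remains to pass from the empirical task average $\tfrac1m\sum_{i=1}^m \EV_{XA\sim\mathbb{P}^{\pi^E}_{\theta_i}}[\bm{1}(\wpi(X)\neq A)]$ to the population value $\mathcal{E}_{clone}$. This is not a plain concentration step, since $\wpi$ is itself a function of $\theta_1,\dots,\theta_m$; I would instead use an information-theoretic (or PAC-Bayesian) generalization bound over the task distribution, for which the generalization gap of a $[0,1]$-bounded loss over $m$ i.i.d.\ examples is at most $\sqrt{c\,\mathcal{I}/m}$ with $\mathcal{I}$ a mutual-information (resp.\ KL) term. The structural fact that makes $\mathcal{I}$ small is the graphical model $T\to X\to Z\to A$ with $A$ a deterministic function of the history (Asm.~\ref{asm:expert_deterministic}): the only task-dependent quantity the cloning objective must \emph{fit} is the expert's action, which depends on $T$ solely through $Z$, so by the data-processing inequality the task-to-label information is at most $I_{T;Z}$, while the task information carried by the observations themselves concerns the \emph{input} distribution and does not enter the hypothesis-complexity term. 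To turn this into a bound on the $|\mathcal{A}|$-valued indicator loss I would write $\bm{1}(\wpi(X)\neq A) = \sum_{a\in\mathcal{A}}\bm{1}(A=a)\,\bm{1}(\wpi(X)\neq a)$, apply the task-generalization bound to each of the $|\mathcal{A}|$ binary sub-problems, and union-bound over $a\in\mathcal{A}$; this produces the $\sqrt{I_{T;Z}\,|\mathcal{A}|\log(|\mathcal{A}|/\delta)/m}$ term.

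Combining Steps~1--3, rescaling $\delta$ by a constant, and hiding constants and lower-order terms with $\lesssim$ gives the theorem. I expect Step~3 to be the main obstacle: making rigorous that a property of the \emph{expert's} internal computation ($I_{T;Z}$) governs the \emph{cloned policy's} across-task generalization, and in particular cleanly separating the task information that leaks into the inputs (observations, rewards) --- which should be inert for generalization --- from the task information in the expert's action-selection rule, which is what $I_{T;Z}$ measures and what must appear in the bound. Step~1's $R$-versus-$H$ bookkeeping and Step~2's dependent-samples caveat are, by comparison, routine.
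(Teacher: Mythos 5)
Your proposal is correct and follows essentially the same route as the paper: reduce the return gap to a disagreement probability with an $RH$ factor, pass through the expert via a triangle inequality to obtain $\mathcal{E}_{gen}(\pi^E)$ plus the cloning disagreement under $\mathbb{P}^{\pi^E}_T$, control the within-task part by a union bound over $\Pi$ and the $m$ tasks at rate $\log(|\Pi| m/\delta)/n$, and control the across-task part with an information-bottleneck generalization bound yielding $\sqrt{I_{T;Z}\,|\mathcal{A}|\log(|\mathcal{A}|/\delta)/m}$. The only cosmetic difference is in the within-task step, where the paper invokes the Hellinger-distance guarantee for the log-likelihood MLE from Foster et al.\ and then converts it to the $0$--$1$ disagreement, rather than your direct realizable-ERM tail bound.
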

All of the derivations and the hidden constants can be found in Appendix~\ref{apx:proofs}. Instead, here we unpack the bound and discuss the meaning of each term. First, the $RH$ factor accounts for the cost of a ``mistake'' of the cloned policy, i.e., choosing an action different from $\pi^*$. The terms $\mathcal{E}_{gen} (\pi^E)$ and $\mathcal{E}_{opt} (\wpi)$ depends on the quality of the expert's policy and the solver for~\eqref{eq:bc_optimization_problem}, hence they cannot be reduced with additional data. The last term, scaling with the number of trajectories in each demonstrated task $n^{-1}$, comes from a typical behavioral cloning analysis of generalization within the training task~\citep[e.g.,][] {foster2024behavior}. The more demonstrations we have from a task, the better we can clone the expert's policy in that task. The third term, scaling with the number of demonstrated tasks $m^{-1}$, controls the generalization across tasks and comes from the analysis of generalization induced by an information bottleneck~\cite{kawaguchi2023does}, which is expressed by $I_{Z;T}$.  The most important finding of our result lies in this term: To improve generalization of the cloned policy, we can either increase the number of tasks $m$ or apply an information bottleneck---a ``blindfold''---to the demonstrator to reduce $I_{Z;T}$ without paying any meaningful price, as it is typically $\mathcal{E}_{gen} (\pi^E) = 0$ for tasks with sparse rewards and $\mathcal{E}_{gen} (\pi^E) \lesssim \log(H)$ for dense rewards, while other terms remain the same.

In the next sections, we provide an extensive empirical evaluation showing that this result is far from being a theoretical fluke, but translates to practical scenarios as well. 

\textbf{Overcoming assumptions.}~~The result presented in this section holds for deterministic expert's policies (Asm.~\ref{asm:expert_deterministic}), finite action space (due to $|\mathcal{A}|$ dependency), and finite policy class (due to $|\Pi|$ dependency). Deterministic expert's policy and finite policy class can be easily overcome by extending the in-task generalization result to stochastic policies and infinite policy classes, as done in~\cite{foster2024behavior}. Instead, the dependency on $|\mathcal{A}|$ comes from reducing the cloning problem to classification, in order to invoke information bottleneck generalization results~\cite{kawaguchi2023does}. In principle, extending the analysis to continuous action requires an analogous generalization bound for the regression problem~\cite{ngampruetikorn2022information} and to circumvent established negative results for this setting~\citep{simchowitz2025pitfalls}. 






\section{Experiments}
\label{sec:experiments}
In this section, we report an experimental campaign to validate the results of previous sections and to demonstrate the importance of the expert's behavior for multi-task BC generalization.
To this end, we train two policies, with the same BC algorithm, on human demonstrations collected by either a traditional expert or a blindfolded expert\footnote{Our human demonstration dataset for Procgen maze and heist is available on the project website.}.
We refer to the resulting policies as $\pi_{BC}$ and $\pi_{BC-BF}$ respectively. We compare them in the success rate achieved on both the demonstrated tasks and unseen test tasks. We repeat the experiment twice, first in simulation on the Procgen maze and heist (Section~\ref{sec:experiments_procgen}), then on a real robot peg insertion task (Section~\ref{sec:experiments_peg_insertion}).
For both domains, we describe how the information bottleneck for the blindfolded expert is obtained in practice. 

\subsection{Procgen maze and heist}
\label{sec:experiments_procgen}
Procgen~\cite{cobbe2020leveraging} is a popular benchmark for measuring sample efficiency and generalization~\citep{raileanu2021automatic,raileanu2021decoupling,cobbe2021phasic,kirk2021survey,zisselman2023explore}. It consists of $16$ different procedurally generated environments in which new levels are randomly generated for every episode, forcing agents to handle changing layouts, colors, and textures. 
Here we focus on the maze and heist games in the ``easy'' setting, in which each level is a 2D maze-like layout that the agent navigates.
These tasks are the most challenging tasks for generalization in the Procgen suite, and until the results of \cite{zisselman2023explore} have seen only minor improvements over random walk. We test how training on a set of demonstrated maze and heist tasks generalizes to unseen tasks for the different experts.

 \begin{figure}[ht]
  \centering
   \includegraphics[width=1.0\textwidth,trim=0.6cm 0cm 0.6cm 0.0cm, clip]{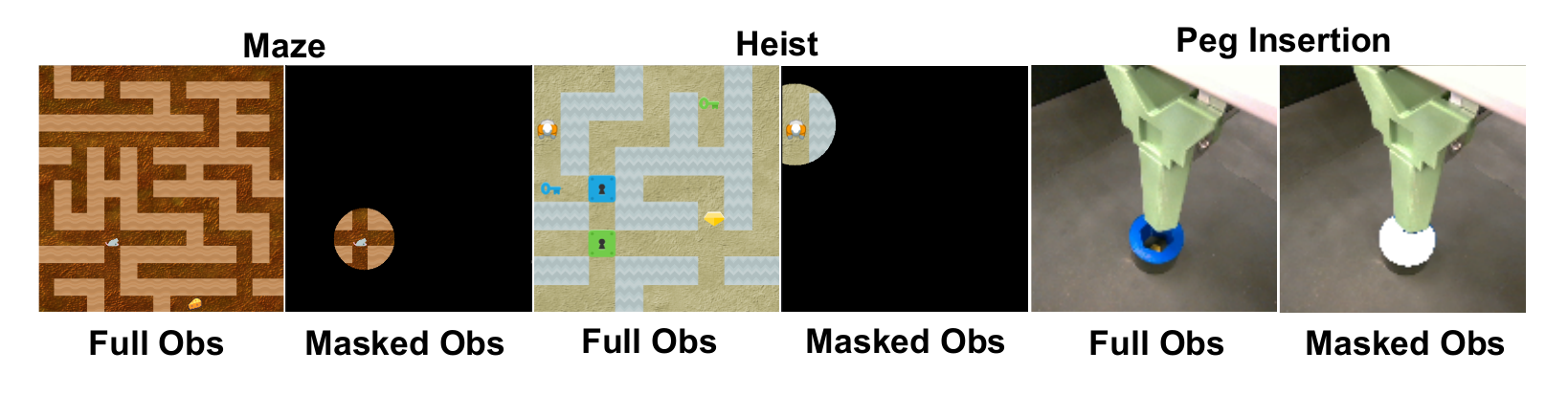}
  \caption{Demonstration of actual experts' observations. For the Procgen maze (left-pair), Procgen heist (middle-pair), and robotic peg insertion (right-pair). We show the full observation of the Expert and the masked observation of the Blindfolded-Expert.}
  \label{fig:observations}
\end{figure}

\begin{figure}[ht]
 \begin{subfigure}{0.246\textwidth}
 \centering
     \includegraphics[trim={25 24 30 0},clip,width=0.99\textwidth]{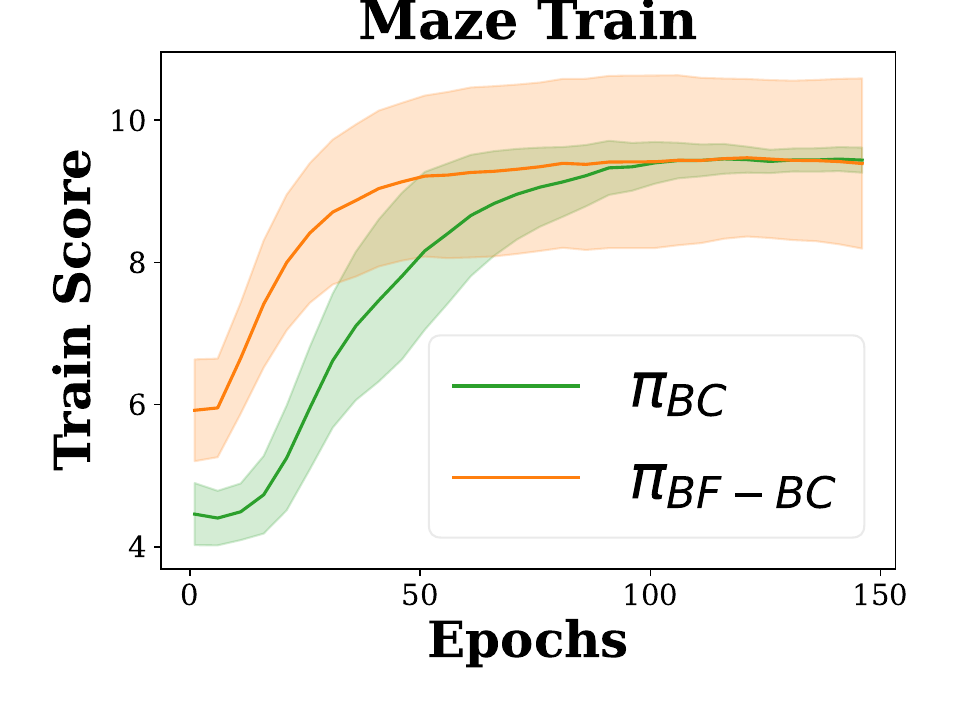}
     \label{fig:procgen_train_maze}
 \end{subfigure}
 \begin{subfigure}{0.246\textwidth}
 \centering
     \includegraphics[trim={25 24 30 0},clip, width=0.99\textwidth]{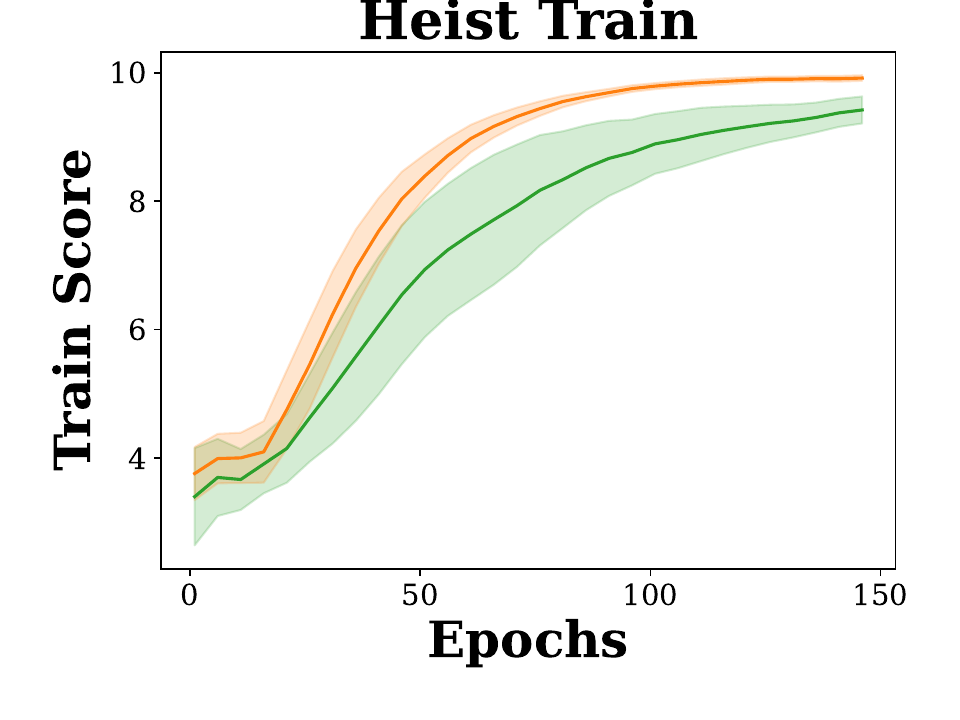}
     \label{fig:procgen_test_maze}
 \end{subfigure}
  \begin{subfigure}{0.246\textwidth}
 \centering
     \includegraphics[trim={25 24 30 0},clip,width=0.99\textwidth]{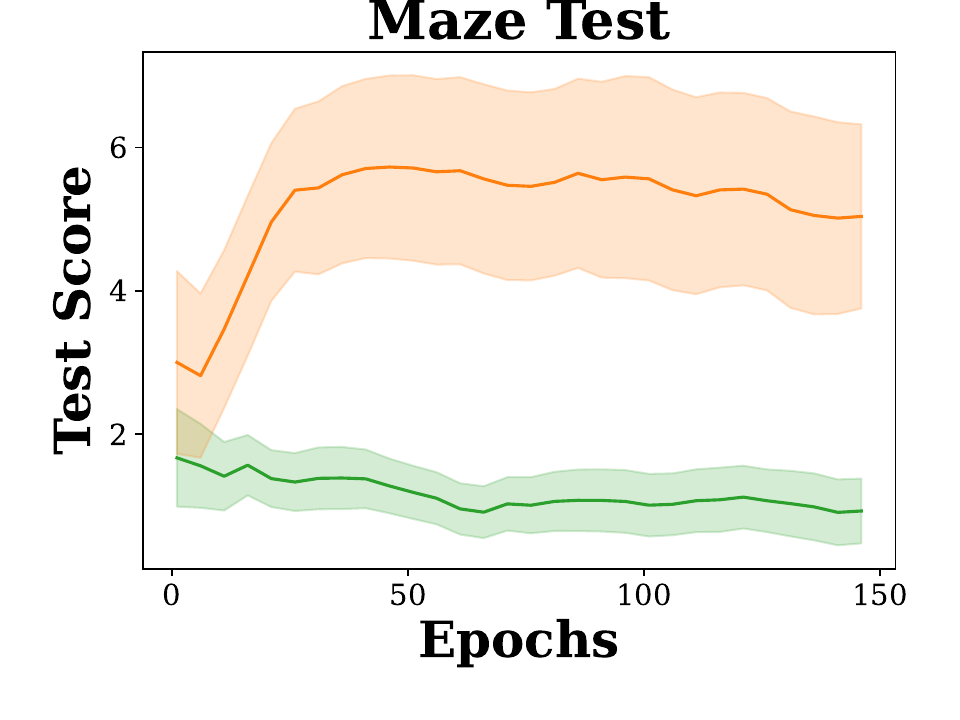}
     \label{fig:procgen_train_heist}
 \end{subfigure}
 \begin{subfigure}{0.246\textwidth}
 \centering
     \includegraphics[trim={25 24 30 0},clip, width=0.99\textwidth]{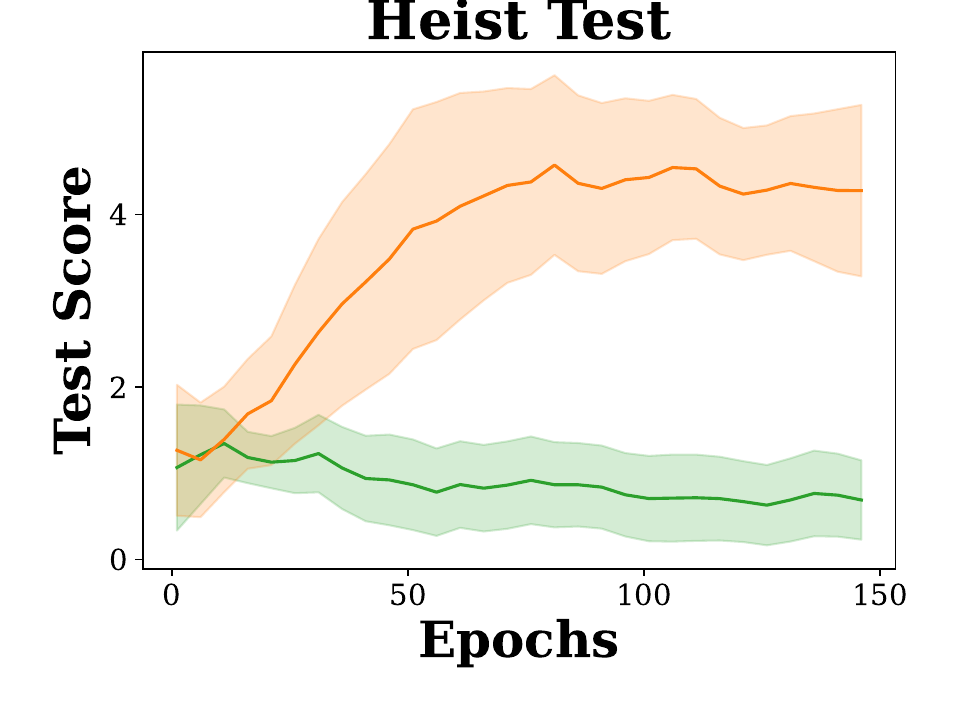}
     \label{fig:procgen_test_heist}
 \end{subfigure}
\caption{Game score as a function of training epochs in the Procgen maze and heist. Left-pair: performance on $200$ training levels. Right-pair: performance on unseen test levels. The mean and standard deviation are computed over 10 seeds.}
\label{fig:procgen_maze_heist_results}
\vspace{-10pt}
\end{figure}

\textbf{Maze.}~~In Procgen maze, the agent (mouse) must find and reach a single goal (piece of cheese) in a 2D maze. Procedural level generation produces variation in maze layout (sprawling corridors), goal positions, and the color and texture of navigable spaces.

\textbf{Heist.}~~
The Procgen heist is based on a maze-like level, in which the agent has to additionally solve a sequence of sub-goals in order to solve the task (reaching the gem). At each level, the goal (gem) is hidden behind a series of color-coded locks, accessible through keys of corresponding colors, that are scattered throughout the level (see Figure \ref{fig:observations}). This task requires multi-stage planning in the form of lock-waypoints and to overcome variations in sub-task ordering, making the task more complex and challenging than a regular maze.

\textbf{Setup.}~~
For training, we consider $200$ procedurally generated levels.
There are $4$ discrete \textbf{actions}, move up, down, left, and right in the maze game, and additional 4 actions in heist: left-down, left-up, right-down, and right-up.
The \textbf{observations} are $64\times64$ RGB images of the current state as a top-down view of the maze.

\textbf{Data collection.}~~
We collect a total of $4K$ demonstrated trajectories from $200$ different training levels ($20$ trajectories per level). The environment allows for at most $500$ input steps for maze and $1000$ for heist, and we only retain trajectories of successful expert demonstrations.
The \textbf{standard experts} are humans who see the top-view of the entire game, and therefore are likely to follow the shortest path to the goal.
The \textbf{blindfolded experts} are humans playing the game with occluded observations of its layout, such that only the immediate proximity of the agent is visible and the rest of the level is concealed (see Figure \ref{fig:observations}, middle and left). As a result, human experts cannot directly plan a path to the goal location, and exploration of the level is needed. Note that while the observations are masked to the expert, the stored data contains the original (unmasked) observations for training the cloning algorithm. 
The \textbf{trajectories} are tuples of observation, action, reward, and done flag $(o_t, a_t, r_t, \text{done})$.

\textbf{Policy architecture and training.}~~Training is conducted from scratch on the demonstrated trajectories by minimizing the negative log likelihood~\ref{eq:bc_optimization_problem}. We use the architecture from~\cite{mediratta2023generalization} for both the Expert ($\pi_{BC}$) and BF-Expert ($\pi_{BF-BC}$)---a ResNet \cite{he2016deep} to encode the observations, which are then processed by two fully-connected layers. To capture the exploratory behavior, we add a single GRU \cite{cho2014properties} before the Softmax policy layer (further details are in Appendix~\ref{apx:maze_ext}).

\textbf{Results.}~~In Figure \ref{fig:procgen_maze_heist_results}, we compare the game score over the training epochs achieved by policy cloning of the standard Expert ($\pi_{BC}$) and the BF-Expert ($\pi_{BC-BF}$). As we can see from Figure~\ref{fig:procgen_maze_heist_results} (left-pair), the performance of the two policies is similar on the training levels, approaching the maximum task score. However, as evident from Figure~\ref{fig:procgen_maze_heist_results} (right-pair), while the $\pi_{BC-BF}$ gracefully improves the test score, the $\pi_{BC}$ policy overfits to the training-set, and its test score slightly degrades with further training. This is a testament to the inherent generalization capabilities granted by the BF-Expert w.r.t. a standard Expert. 

%
\begin{table}[t]
\small
    \caption{Average number of steps $\pm~\mathrm{std}$ in the trajectories demonstrated by standard Experts and BF-Experts. The latter takes more steps due to the information bottleneck, which forces exploration.}
    \centering
    \begin{tabular}{l| ccccc | c}
        \toprule
        Mode & \includegraphics[width=0.025\textwidth]{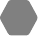} 
        &\includegraphics[width=0.025\textwidth]{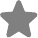} 
        &\includegraphics[width=0.025\textwidth]{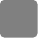} &\includegraphics[width=0.025\textwidth]{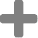} 
        &\includegraphics[width=0.025\textwidth]{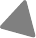} 
        & Maze\\
        \toprule
        Experts &  $61.2\pm1.4$ & $69.4\pm3.3$ & $64.8\pm2.1$ & $96.6\pm4.9$ & $68.8\pm2.2$ & $28.6\pm1.2$ \\
         BF-Experts   & $74.3\pm5.9$ & $165.6\pm42.9$ & $163.6\pm43.6$ & $148.6\pm33.1$ & $194.4\pm55.0$ & $51.6\pm1.8$ \\
    \end{tabular}
    \label{tab:number_of_steps_traning}
\end{table} 
\begin{figure}[t]
\centering
 \begin{subfigure}{0.31\textwidth}
     \includegraphics[width=1.\textwidth]{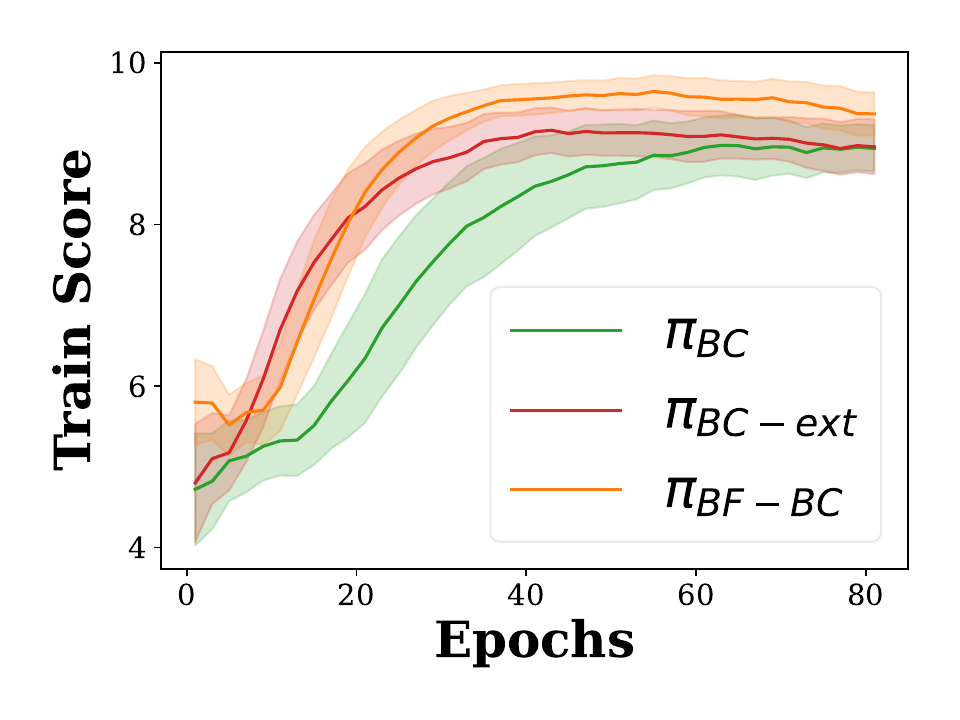}
     \label{fig:procgen_train_100}
 \end{subfigure}
 \begin{subfigure}{0.31\textwidth}
     \includegraphics[width=1.\textwidth]{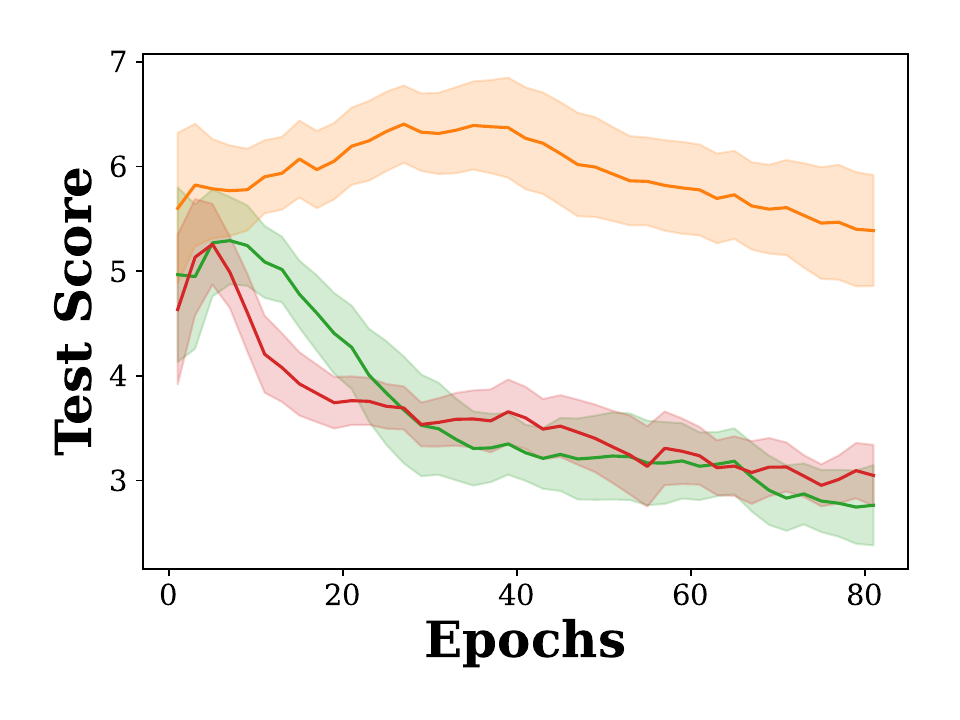}
     \label{fig:procgen_test_100}
 \end{subfigure}
 \vspace{-12pt}
\caption{Procgen maze score as a function of training epochs. Left: performance on $100$ training levels. Right: performance on unseen test levels. Policies $\pi_{BC}$ and $\pi_{BF-BC}$ are trained on $2K$ trajectories, the former clearly overfits. We also train $\pi_{BC-ext}$ on $4.6K$ non-blindfolded experts' trajectories, matching the total number of steps of $\pi_{BF-BC}$. The resulting $\pi_{BC-ext}$ overfits despite access to twice the amount of data. The mean and standard deviation are computed over $10$ seeds.}
\label{fig:procgen_ext_results}
\vspace{-10pt}
\end{figure}

\textbf{Number of steps.}~~Table \ref{tab:number_of_steps_traning} (left-most column) details the average number of steps taken by the experts (Experts) and the blindfolded experts (BF-Experts) across $100$ demonstrated levels, accruing $2K$ maze trajectories. Trajectories taken by the Experts are shorter on average than the BF-Experts, supporting the assumption that the BF-Experts exhibit a more exploratory behavior, whereas the Experts take the shortest path to the goal.

To further highlight the contribution of exploration toward generalization and to show that it isn't merely the result of additional training steps, we train a policy $\pi_{BC-ext}$ on double the amount of non-blindfolded experts' trajectories---to match the total number of steps produced by the blindfolded experts. 
Figure~\ref{fig:procgen_ext_results} shows that even with an equal number of total steps from both experts, $\pi_{BC-ext}$ overfits. Further details on the number of trajectories and steps are provided in Appendix~\ref{apx:maze_ext}.

\subsection{Robotic peg insertion}
\label{sec:experiments_peg_insertion}
Peg insertion is a standard problem in robotic manipulation. Here, we consider the insertion task in the Functional Manipulation Benchmark~\citep[FMB,][]{luo2023fmb}, which focuses on inserting variously shaped pegs into tightly matching holes. Different from \cite{luo2023fmb}, however, we investigate \textit{generalization}: How training on a fixed set of shapes generalizes to inserting previously unseen shapes. We simplify less relevant technical aspects of the benchmark by fixing the peg to the robot gripper, and 3D-printing individual holes, so that discerning the target hole becomes trivial (see Figure \ref{fig:setting}-left). In addition, we added several new shapes to the benchmark to increase its variations.

\textbf{Setup.}~~
The task comprises ten pegs of various shapes with corresponding slots. In our setting, the robot initiates with the peg already in its grip and needs to insert it into a single-slotted board anchored to the surface.
The \textbf{robot setup} is shown in Figure~\ref{fig:setting} in full, alongside a few examples of peg shapes. We use a Franka Emika Panda robot arm and teleoperate the robot using a SpaceMouse. For operating the robot and the SpaceMouse, we use the SERL open-sourced package \cite{luo2024serl} with the same settings as the authors.
The \textbf{actions} are input as SpaceMouse commands: 6-DoF end-effector move and twist (location and Euler angles) at $10 \mathrm{Hz}$, tracked by a low-level impedance controller running at $1 \mathrm{KHz}$. The
\textbf{observations} are obtained as RGB-only images from two Intel RealSense D405 cameras, mounted on the robot end-effector, which simultaneously capture images\footnote{Following the conclusions of FMB \cite{luo2023fmb}, we omit the depth data, as they showed it has a marginal benefit.}. To avoid background distractions in image observations, we place each shaped board inside a black bin. In addition to the 6-DoF pose of the robot end-effector, the framework also includes the force, torque, and velocity information provided by the Franka Panda robot. 

\begin{figure}[t]
  \centering
   \includegraphics[width=0.8\textwidth]{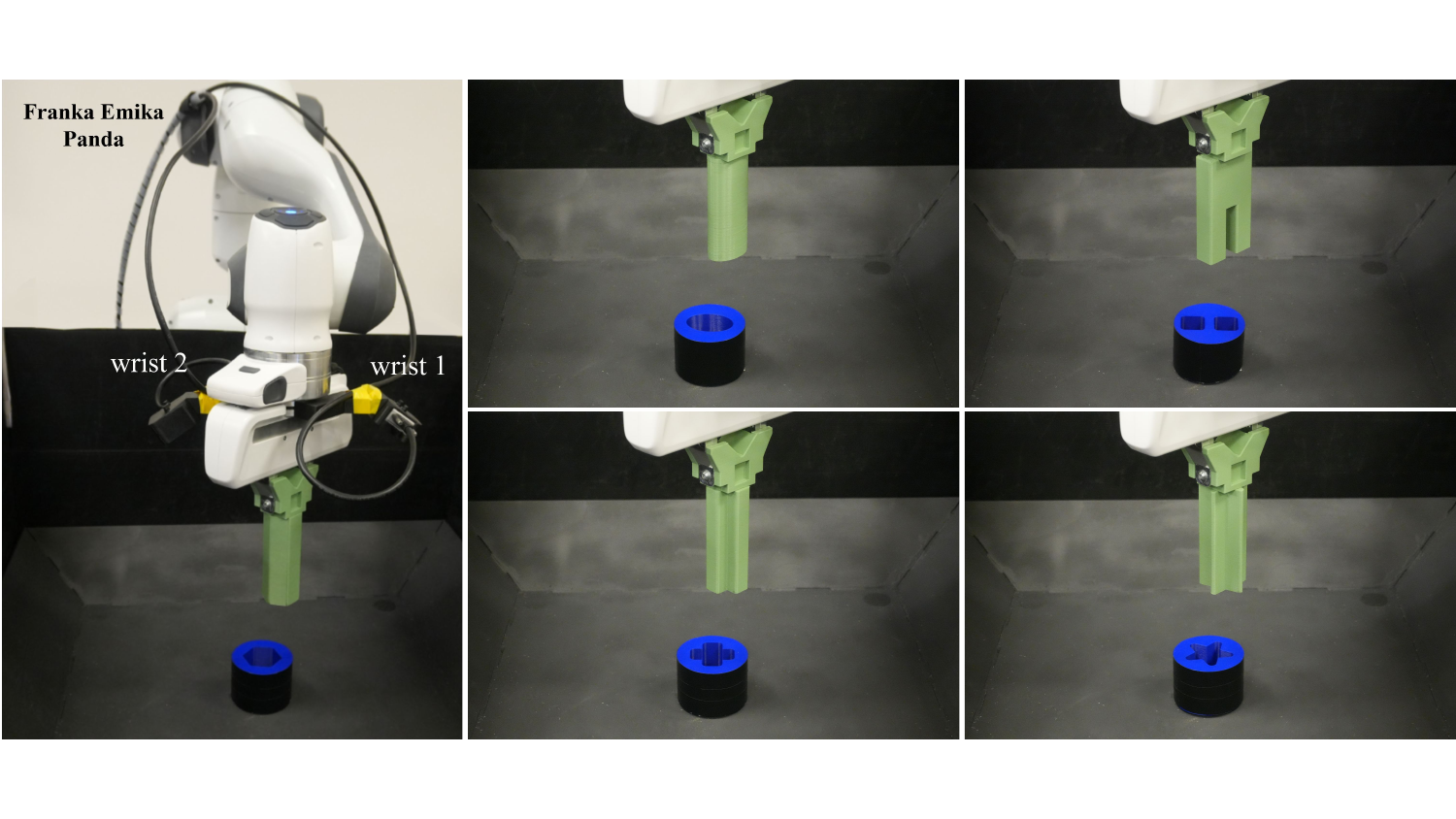}
   \vspace{-10pt}
  \caption{The robotic arm configured for peg insertion (left). Close-up view of various peg shapes and their boards (right). }
   \label{fig:setting}
   \vspace{-16pt}
\end{figure}

\textbf{Data collection.}~~
We split the ten shapes into training and test shapes. When collecting demonstrations, only the training shapes are used, while the test shapes serve as a withheld subset for evaluation. 
We collect $400$ trajectories from human demonstrations of each training shape, first with a standard Experts and then with BF-Experts. Each trajectory begins at a random initial pose. When the full information from the wrist cameras is provided to the human experts, they easily succeed in inserting all training shapes.
The \textbf{blindfolded experts} are human experts exposed to redacted observations from the wrist cameras, which occlude the articulation of the slot as they attempt to insert the peg (see Figure \ref{fig:observations}). We use SegmentAnything2 (SAM2) \cite{ravi2024sam} modified to segment a live video stream and prompt it to mask out the shape of the target hole. In addition to the masked-out images from the wrist cameras, the initial robot pose varies with each insertion attempt, thus preventing the expert from memorizing or inferring the articulation of the target hole. Note, however, that the recorded trajectories collected by both the Expert and BF-Expert contain the full unmasked observation. The distinction is then the behavior of the two experts, with blindfolded experts taking exploratory actions to cope with masked-out images in an attempt to complete the task.
The \textbf{trajectories} are collected on the training shapes only. Table \ref{tab:number_of_steps_traning} details the training shapes and the average number of steps taken by the standard Expert and BF-Expert until successful peg insertion. Clearly, the trajectories demonstrated by the BF-Expert are longer, indicating more exploratory behavior, whereby the expert must rely on masked-out images until resolving the correct articulation for inserting the peg. In the next sections, we show that the resulting exploratory behavior is useful for generalization.

\textbf{Policy architecture and training.}~~
Training was conducted for $k\in\{2,3,4,5\}$ peg shapes, with the remaining shapes serving as a withheld test set. We use the same network architecture for cloning both the expert $\pi_{BC}$ and the blindfolded expert $\pi_{BF-BC}$. 
Specifically, we use a weight-shared frozen ResNet-10 encoder \cite{he2016deep} pretrained on the ImageNet dataset \cite{deng2009imagenet} for encoding the incoming images from both wrist cameras. The resulting embeddings are concatenated with the MLP-embedding of the proprioceptive information before entering a single GRU \cite{cho2014properties} that outputs a Gaussian policy. The use of a memory-based architecture is crucial to fully capture the non-Markovian exploratory behavior of the blindfolded expert \cite{mutti2022importance}. For more detailed specifications regarding our experimental setup and hyperparameters, please refer to Appendix~\ref{apx:peg_insetion}.

\textbf{Results.}~~Figure~\ref{fig:peg_insertion_results} shows the success rate for $k={2,5}$ training shapes (results for $k={3,4}$ are in Appendix \ref{apx:peg_insetion}). The success rate is the average of $24$ insertion attempts per peg shape by the robotic arm.
The results demonstrate that cloning the BF-Expert with $\pi_{BF-BC}$ achieves better generalization compared with the standard Expert, cloned with $\pi_{BC}$, across all peg shapes and over all test subsets. Importantly, the advantage of the proposed blindfolding approach is more significant when fewer shapes are used to train the model, i.e., when a larger portion of shapes are withheld during training.
Interestingly, the figure shows that even for shapes encountered in training, the $\pi_{BF-BC}$ policy generalizes better than cloning the standard expert $\pi_{BC}$. This is a result of the limited ability of expert demonstrations to account for all real-world variations, e.g., lighting and control-loop errors. In comparison, the $\pi_{BF-BC}$ is more robust to these kinds of errors, since it imitates blindfolded experts who cannot rely on visual cues, and must compensate in order to solve the task.
Additionally, it is worth mentioning that certain shapes are more challenging than others. We observe that shapes with greater radial symmetry and convexity (such as the circle, triangle, hexagon) are simpler to learn than shapes with non-radial symmetry (such as the ellipse, rectangle) or non-convex shapes (such as the cut-out-rectangle), as they require a more specific orientation for insertion.
\begin{figure}[t]
 \begin{subfigure}{0.49\textwidth}
     \includegraphics[width=\textwidth]{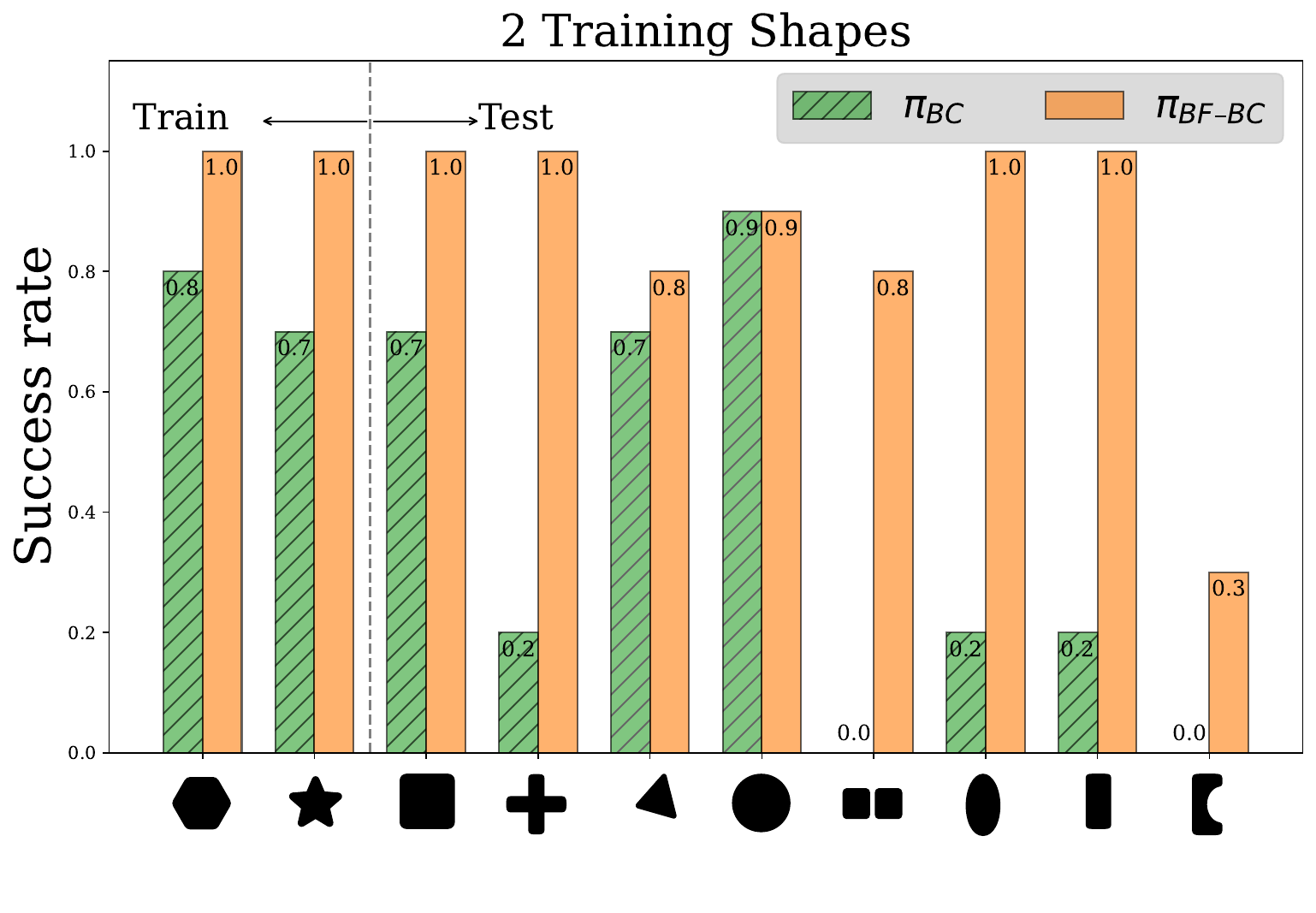}
 \end{subfigure}
 \begin{subfigure}{0.49\textwidth}
     \includegraphics[width=\textwidth]{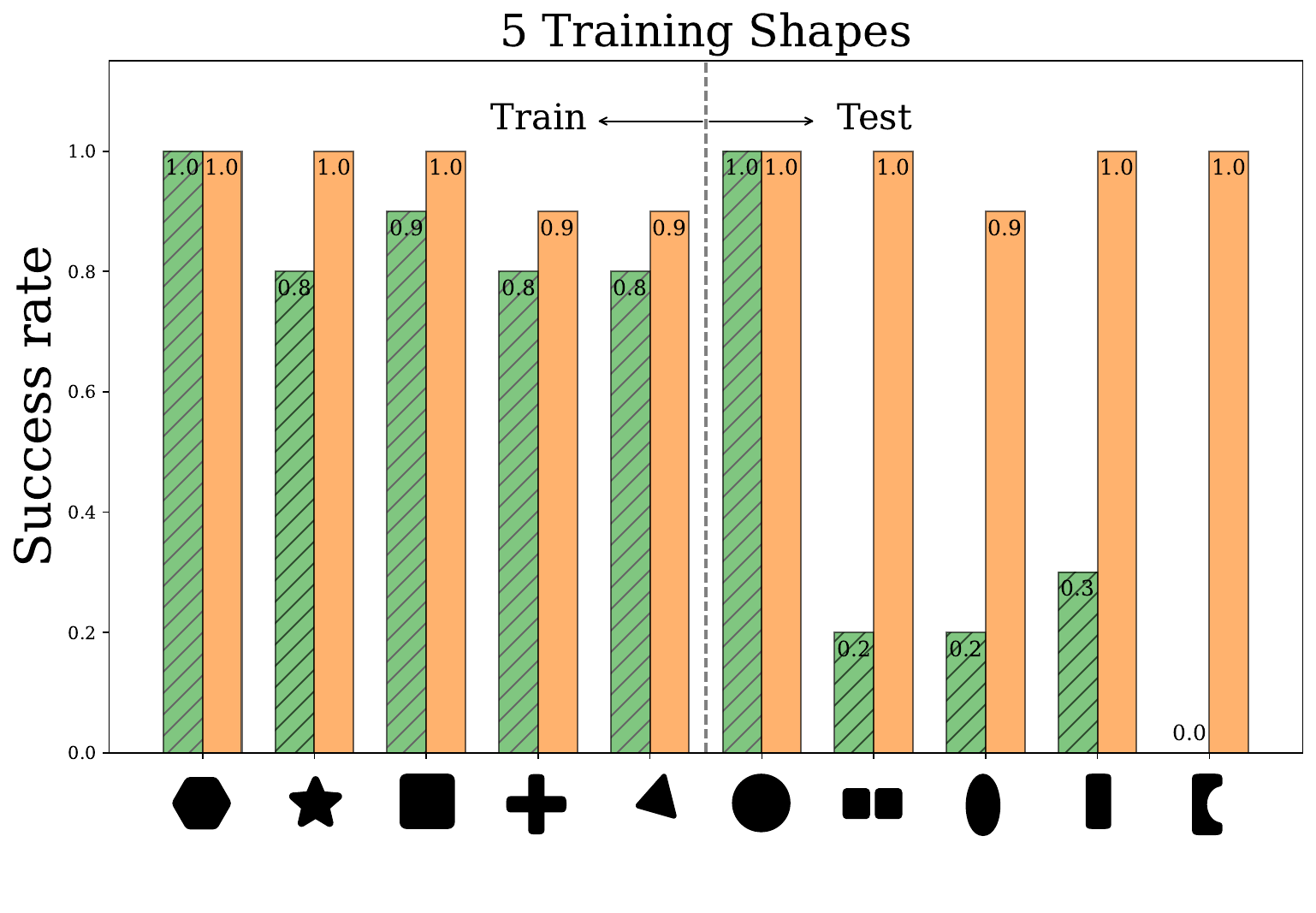}
 \end{subfigure}
 \vspace{-10pt}
\caption{Success rate of robotic peg insertion for $10$ peg shapes (horizontal axis). We train on a subset of shapes $k=\{2,5\}$, with the remaining shapes withheld as test-set. Both for $k=2$ (left) and $k=5$ (right), cloning blindfolded experts generalizes better than cloning standard experts.}
\label{fig:peg_insertion_results}
\vspace{-16pt}
\end{figure}

\textbf{The sensitivity to the choice of a specific blindfold.}~~To demonstrate the generality of our approach, we conduct an experiment of robotic peg insertion, where we apply random noise to the experts' observations instead of masking the shape socket. We collected a total of $1600$ expert trajectories with random noise $p \sim U[0,P]$ added to all pixel values, from varying maximum noise level $P \in [0, 100, 170, 200]$ and 2 training shapes (square and star). We then train a network on each noise level $P$ (i.e., $400$ trajectories per level). Table \ref{tab:noise_added} details the Success Rate (SR) [\%] and State Entropy (SE) of the demonstrated trajectories, alongside the networks' performance on train (square and star) and test shapes (plus and ellipse). The table shows several interesting results:
\begin{enumerate}
    \item Introducing more noise increases the State Entropy (SE), but a severe noise corruption ($P=200$) impairs the expert, lowering the Success Rate (SR) of the demonstrations, and degrading the robot's performance.
    \item Adding the ``right'' amount of noise ($P=100$) helps generalize to unseen shapes. Although less effective than masking, it still significantly outperforms the traditional approach, i.e., without any blindfolding (0 noise).
    \item This experiment demonstrates that even a general form of blindfold (non-task specific) may still be superior to the conventional imitation learning approach.
\end{enumerate}
\begin{table}[h]
\centering
\caption{Success Rate (SR) [\%] and State Entropy (SE) for the various blindfold types: range of noise levels or SAM2 mask. The column "Demos" details the SR and SE of experts' demonstration subject to the blindfold. The left-most columns are their corresponding evaluation performance [\%] on Train (square, star) and Test (plus, ellipse).}
\begin{tabular}{ccccccc}
\hline
\multirow{2}{*}{Max-Noise Level} & \multicolumn{2}{c}{Demos} & \multicolumn{2}{c}{Train} & \multicolumn{2}{c}{Test} \\
\cline{2-7}
 & SR & SE & square & star & plus & ellipse \\
\hline
0 & 100\% & 3.17 & 71\% & 67\% & 20\% & 17\% \\
100 & 96\% & 3.26 & 100\% & 88\% & 96\% & 79\% \\
170 & 95\% & 3.39 & 92\% & 71\% & 58\% & 62\% \\
200 & 58\% & 3.46 & 46\% & 38\% & 21\% & 54\% \\
SAM2 Mask & 100\% & 3.52 & 96\% & 96\% & 100\% & 96\% \\
\hline
\end{tabular}
\label{tab:noise_added}
\end{table}

\section{Related works}
Our work closely relates to imitation learning, information bottleneck, and robotic manipulation.

\textbf{Imitation learning theory.}~~
The imitation learning literature counts a plethora of contributions, for which we refer to recent surveys~\cite{hussein2017imitation, zheng2022imitation}. Here we are concerned with the generalization of behavioral cloning -- a dominant imitation learning technique. 
Previous works~\citep{ross2010efficient, ross2014reinforcement, xu2020error, rajaraman2020toward, rajaraman2021value, rajaraman2021provably, tiapkinregularized, foster2024behavior} have studied the theoretical limits of behavior cloning and settled the generalization gap on the training task as $J(\pi^*) - J(\wpi) \lesssim R \log(|\Pi|/\delta)/n$~\citep{foster2024behavior}. These results are mostly limited to cloning a Markovian policy in a single MDP. Other works have considered more general settings, including cloning history-based policies~\cite{block2023on} and imitation learning under partial observability~\cite{swamy2022sequence}. However, the generalization of behavior cloning in a meta learning setup, which is popular in empirical works~\cite{duan2017one, finn2017one}, is understudied. The only generalization analysis that strikes close to this setting is, to the best of our knowledge, the one in~\cite{ren2021generalization}. Differently from ours, they assume online access to the set of tasks to further fine-tune the cloned policy and they do not study how the information available to the demonstrator affects generalization, which is our main theoretical contribution.

\textbf{Imitation learning in robotics.}~~
Imitation learning has been fundamental to various robotic domains, including autonomous driving~\citep{pomerleau1988alvinn}, locomotion~\citep{peng2018deepmimic}, flight~\citep{abbeel2010autonomous}, and manipulation~\citep{finn2017one}. A recent survey on manipulation, our case of interest here, is provided in \cite{an2025dexterous}. 
Several works showed that imitation is useful for learning complex visuo-motor policies for robotic manipulation, where key ideas include predicting a sequence of future actions, and using a diffusion generative model to learn a distribution over the action sequence~\citep{chi2023diffusion,zhao2023learning,fu2024mobile}.
However, it is known that imitation learning requires a large number of demonstrations in order to generalize to variations in the task. Previously studied mitigations include 3-dimensional priors in the representation~\citep{ze20243d}, automatic data augmentation~\citep{mandlekar2023mimicgen,garrett2024skillmimicgen,xue2025demogen}, interactive data collection~\citep{hoque2024intervengen}, and using simulations~\citep{torne2024reconciling}.
Another approach is leveraging large-scale data, either by collecting diverse task variations~\cite{lin2024data}, or by fine tuning robotics foundations models~\cite{o2024open,team2024octo,kim2024openvla}.
Differently from the approaches above, we postulate that the way a demonstrator performs a task affects generalization, showing that by blindfolding the demonstrator we obtain an exploratory behavior, which induces better generalization when cloned. The generalization of exploratory behavior has been demonstrated in zero-shot reinforcement learning~\cite{zisselman2023explore}. In comparison, we apply this idea to imitation learning, which requires a different approach, and also develop a theoretical explanation for the improved generalization. The idea that exploration at test time helps generalization has also been explored in the sim-to-real context in \cite{wagenmaker2024overcoming}.

\textbf{Information bottleneck.}~~
When learning a $X \to Y$ relationship between random variables, the information bottleneck~\citep{tishby1999information} prescribes to ``squeeze'' $X$ into a representation that only retains information to predict $Y$. This principle is believed to be a factor beyond the generalization capabilities of deep learning~\cite{shamir2010learning}
and formal generalization bounds through the information bottleneck have been derived~\citep{shwartz2018representation, ngampruetikorn2022information, kawaguchi2023does}.
In imitation learning, the information bottleneck has been used to analyze generalization in~\citep{bai2025rethinking}. However, they consider generalization on the training task only and the information bottleneck is applied to the representation of the cloned policy. Our work advocates for applying an information bottleneck to the demonstrator to improve generalization of the cloned policy.

\section{Conclusion}
We showed that cloning the behavior of blindfolded experts leads to better generalization to unseen tasks. We supported this with theoretical analysis and conducted empirical tests that, for the first time, explored the concept of blindfolding experts in the context of a real-world robotic task, as well as a maze videogame. We observed that in both peg insertion and maze-solving tasks, blindfolding the experts encouraged them to enact a more exploratory behavior, cloned to produce policies that better generalize. Importantly, our approach achieves better generalization while accommodating any imitation learning algorithm.

Finally, we point out a limitation of the proposed approach: each domain may require a different kind of blindfold (e.g., concealing the field of view in the maze videogame, or masking out the shape of the hole in the peg insertion task). Too little obstruction does not elicit exploration, while redacting too aggressively would impede any informative exploratory behavior (may resort to near random walk). An interesting question is how to find the optimal balance, which we reserve for future research.

\section*{Acknowledgments}
This research was Funded by the European Union (ERC, Bayes-RL, 101041250). 
Views and opinions expressed are however those of the author(s) only and do not necessarily reflect those of the European Union or the European Research Council Executive Agency (ERCEA). Neither the European Union nor the granting authority can be held responsible for them.
\bibliographystyle{plain}
\bibliography{biblio}

\clearpage
\appendix

\section{Proofs}
\label{apx:proofs}

Here we provide the derivations for the generalization bound in Section~\ref{sec:generalization}. The proof requires a non-trivial combination of previous results in provable imitation learning, mostly from~\cite{foster2024behavior}, and the generalization guarantees of information bottleneck~\cite{kawaguchi2023does}. These results are reported in Lemma~\ref{lem:bc_sample_complexity} and \ref{lem:information_bottleneck_generalization}. Before going through the proofs, we state the main theorem again for convenience.
\begin{theorem}[Theorem~\ref{thr:generalization}]
    For a confidence $\delta \in (0, 1)$, it holds with probability at least $1 - 2\delta$
    \begin{align*}
        \EV_{T \sim P_0} [&J (\pi_T^*) - J (\wpi)] \\ &\lesssim RH \left( \mathcal{E}_{gen} (\pi^E) + \mathcal{E}_{opt} (\wpi) + C(E, \pi^E, \wpi) \sqrt{\frac{ I_{T; Z} |\mathcal{A}| \log (|\mathcal{A}| / \delta)}{m}} + \frac{8 \log (|\Pi| m / \delta)}{n} \right)
    \end{align*}
    where
    \begin{itemize}
        \item $R$ is an upper bound to the cumulative reward of any policy in any task $\theta \in \Theta$ (Section~\ref{sec:preliminaries});
        \item $\mathcal{E}_{gen} (\pi^E)$ is the generalization error of the expert's policy (Asm.~\ref{asm:expert_suboptimality});
        \item $\mathcal{E}_{opt} (\wpi)$ is a bound to the optimization error of solving \eqref{eq:bc_optimization_problem} (Asm.~\ref{asm:optimization_error});
        \item $C(E, \pi^E, \wpi)$ is a constant that depends on the training data $E$, the expert's policy $\pi^E$, the cloned policy $\wpi$, and other absolute constants as detailed in Lemma~\ref{lem:information_bottleneck_generalization}.
    \end{itemize}
\end{theorem}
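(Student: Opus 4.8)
The plan is to decompose the multi-task suboptimality gap into three contributions --- the expert being suboptimal, imperfect cloning on the demonstrated tasks, and transfer to unseen tasks --- controlling the first two with the provable behavioral cloning machinery of~\cite{foster2024behavior} (Lemma~\ref{lem:bc_sample_complexity}) and the third with the information-bottleneck generalization bound of~\cite{kawaguchi2023does} (Lemma~\ref{lem:information_bottleneck_generalization}). The first step is a performance-difference argument with $\pi^E$ inserted as an intermediate comparator: since $\pi^E$ is deterministic (Asm.~\ref{asm:expert_deterministic}), along a state visited under $\pi^*$ the action-value gap incurred by deviating is at most $R$, and unrolling over the $H$ steps of the episode gives
\[
\EV_{T\sim P_0}[J(\pi^*_T) - J(\wpi)] \lesssim RH\Big(\mathcal{E}_{gen}(\pi^E) + \EV_{T\sim P_0}\EV_{(X,A)\sim\mathbb{P}^{\pi^E}_T}[\bm{1}(\wpi(X)\neq A)]\Big),
\]
where the disagreement between $\pi^E$ and $\pi^*$ is absorbed into $\mathcal{E}_{gen}(\pi^E)$ (Asm.~\ref{asm:expert_suboptimality}) and the remaining term is the population $0$--$1$ cloning error of $\wpi$ against the expert, averaged over $P_0$. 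Everything downstream bounds this population cloning error.

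Next I would split the population cloning error as the sum of (i) the gap between its value under $P_0$ and its average over the $m$ demonstrated tasks, (ii) the gap between that average and the empirical training error $\mathcal{E}_{opt}(\wpi)$, and (iii) $\mathcal{E}_{opt}(\wpi)$ itself (Asm.~\ref{asm:optimization_error}). Term (ii) is an in-task generalization term: fixing a demonstrated task $\theta_i$, realizability $\pi^E\in\Pi$ (Asm.~\ref{asm:realizability}) together with $\wpi$ being a negative-log-likelihood minimizer on the $n$ demonstrations of $\theta_i$ lets Lemma~\ref{lem:bc_sample_complexity} bound the gap between the population cloning error on $\theta_i$ and its $n$-sample empirical counterpart by $\lesssim\log(|\Pi|/\delta')/n$; a union bound over the $m$ tasks with $\delta'=\delta/m$ yields the $\log(|\Pi|m/\delta)/n$ term on an event of probability at least $1-\delta$.

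Term (i) is precisely a generalization gap whose ``training set'' is the i.i.d.\ sample of $m$ tasks, and this is where the information bottleneck enters. I would recast behavioral cloning as a multiclass classification problem with $|\mathcal{A}|$ labels --- predicting the expert action $A$ from the history observation $X$ --- noting that this $X\to A$ relationship factors through the expert's recurrent representation $Z$, which carries only $I_{T;Z}$ bits about the task. Applying Lemma~\ref{lem:information_bottleneck_generalization} with this factorization bounds term (i) by $C(E,\pi^E,\wpi)\sqrt{I_{T;Z}|\mathcal{A}|\log(|\mathcal{A}|/\delta)/m}$ on a second event of probability at least $1-\delta$. A union bound over the two failure events gives probability at least $1-2\delta$, and collecting all terms through the $RH$ prefactor from the first step yields the stated bound.

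The main obstacle is this last step: invoking the information-bottleneck bound requires casting cloning from history-dependent demonstrations as a classification problem \emph{over the task sample}, so that the object being compressed is genuinely $Z$ (with mutual information $I_{T;Z}$) rather than the full trajectory, and doing so while remaining compatible with the $0$--$1$ loss used in the performance-difference and in-task steps so the three bounds can simply be added. Tracking the data-dependent constant $C(E,\pi^E,\wpi)$ through this reduction, and carrying the recurrence in both $Z$ and $\wpi$'s architecture, are the remaining technical points.
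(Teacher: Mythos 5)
Your proposal is correct and follows essentially the same route as the paper: the same performance-difference step with $\pi^E$ as intermediate comparator absorbing $\mathcal{E}_{gen}(\pi^E)$, the same three-way split of the population cloning error into the across-task gap, the in-task gap, and $\mathcal{E}_{opt}(\wpi)$, with Lemma~\ref{lem:bc_sample_complexity} plus a $\delta/m$ union bound handling the $n^{-1}$ term and Lemma~\ref{lem:information_bottleneck_generalization} (cloning as $|\mathcal{A}|$-class classification over the i.i.d.\ task sample, compressed through $Z$) handling the $m^{-1/2}$ term. The ``main obstacle'' you flag is exactly what the paper's Lemma~\ref{lem:information_bottleneck_generalization} resolves, so no further work is needed.
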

\begin{proof}
    We derive the result as follows
    \begin{align}
        \EV_{T \sim P_0} [J (\pi_T^*) &- J (\wpi)] \nonumber \\
        &\leq RH\EV_{T \sim P_0} \EV_{\tau \sim \mathbb{P}^{\pi^*}_T} [ \bm{1} (\pi_T^* (a^h|x^h) \neq \wpi (a^h|x^h))] \label{eq:thm_1} \\
        &\leq RH \mathcal{E}_{gen} (\wpi) \label{eq:thm_2} \\
        &\leq RH \left( \mathcal{E}_{gen} (\pi^E) + \EV_{T \sim P_0} \EV_{XA \sim \mathbb{P}^{ \pi^E}_{T}} [\bm{1} (\wpi (X) \neq A)] \right) \label{eq:thm_3} \\
        &\lesssim RH \left( \mathcal{E}_{gen} (\pi^E) + \mathcal{E}_{opt} (\wpi) + \sqrt{\frac{ I_{T; Z} |\mathcal{A}| \log (|\mathcal{A}| / \delta)}{m}} + \frac{8 \log (|\Pi|m / \delta)}{n} \right) \label{eq:thm_4}
    \end{align}
    where \eqref{eq:thm_1} and \eqref{eq:thm_2} are straightforward from the definitions of the performance $J (\pi)$ and the generalization error $\mathcal{E}_{gen} (\pi)$ (see Section~\ref{sec:preliminaries} and \eqref{eq:gen_error} respectively), \eqref{eq:thm_3} follows from Assumption~\ref{asm:expert_suboptimality}, and \eqref{eq:thm_4} holds with probability at least $1 - 2\delta$ through Lemma~\ref{lem:bc_generalization}.
\end{proof}

We provide below the lemmas we need to prove the result above.

\begin{lemma}
    For a confidence $\delta \in (0, 1)$, it holds with probability at least $1 - 2\delta$
    \begin{equation*}
        \EV_{T \sim P_0} \EV_{XA \sim \mathbb{P}^{ \pi^E}_{T}} [\bm{1} ( \wpi (X) \neq A)] \lesssim \sqrt{\frac{ I_{T;Z} |\mathcal{A}| \log (|\mathcal{A}| / \delta)}{m}} + \frac{8 \log (|\Pi| m/ \delta )}{n} + \mathcal{E}_{opt} (\wpi)
    \end{equation*}
    where $I_{T;Z}$ is the mutual information between the task $T$ and the internal representation of the demonstrator $Z$.
    \label{lem:bc_generalization}
\end{lemma}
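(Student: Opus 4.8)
\textbf{Proof proposal for Lemma~\ref{lem:bc_generalization}.}
The plan is to split the population $0$--$1$ risk of $\wpi$ under the expert's occupancy measure into two pieces and bound each with one of the two lemmas cited in the text (Lemma~\ref{lem:bc_sample_complexity}, the in-task behavioral cloning bound adapted from~\cite{foster2024behavior}, and Lemma~\ref{lem:information_bottleneck_generalization}, the information-bottleneck generalization bound from~\cite{kawaguchi2023does}). Concretely, I would first reduce the cloning problem to a supervised classification problem by viewing each transition $(\theta_i,\tau_{ij}^h,a_{ij}^h)$ as an i.i.d.\ sample of $(T,X,A)$ with $A$ the label to be predicted from $X$ (where $X$ encodes the trajectory chunk), so that $\wpi$ plays the role of a classifier $X\mapsto\wpi(X)$. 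The target quantity $\EV_{T}\EV_{XA\sim\mathbb{P}^{\pi^E}_T}[\bm1(\wpi(X)\neq A)]$ is exactly the population risk of this classifier, and the empirical counterpart on the $mnH$ training transitions is, up to the normalization, $\mathcal{E}_{opt}(\wpi)$ of Assumption~\ref{asm:optimization_error} (using realizability, Asm.~\ref{asm:realizability}, to know $\pi^E\in\Pi$ so that the ERM/oracle solution is well-defined).

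The main step is to decompose the generalization gap across the two axes of the data --- tasks and within-task trajectories. I would write
\begin{align}
    \EV_{T}\EV_{XA\sim\mathbb{P}^{\pi^E}_T}[\bm1(\wpi(X)\neq A)]
    &= \underbrace{\Big(\EV_{T}\EV_{XA\sim\mathbb{P}^{\pi^E}_T}[\bm1(\wpi(X)\neq A)] - \widehat{\mathcal R}_{\text{task}}(\wpi)\Big)}_{\text{across-task gap}} \nonumber\\
    &\quad + \underbrace{\Big(\widehat{\mathcal R}_{\text{task}}(\wpi) - \mathcal{E}_{opt}(\wpi)\Big)}_{\text{within-task gap}} + \mathcal{E}_{opt}(\wpi), \nonumber
\end{align}
where $\widehat{\mathcal R}_{\text{task}}(\wpi)$ is the average over the $m$ sampled tasks of the \emph{exact} per-task risk $\EV_{XA\sim\mathbb{P}^{\pi^E}_{\theta_i}}[\bm1(\wpi(X)\neq A)]$. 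The within-task gap is controlled by Lemma~\ref{lem:bc_sample_complexity}: for each fixed task, $\wpi$ is being evaluated against its empirical risk on $n$ trajectories drawn i.i.d.\ from $\mathbb{P}^{\pi^E}_{\theta_i}$, and a uniform-convergence / fast-rate argument over the finite class $\Pi$ (union bound over the $m$ tasks, hence the $\log(|\Pi|m/\delta)$) gives the $8\log(|\Pi|m/\delta)/n$ term with probability at least $1-\delta$. The across-task gap is where the information bottleneck enters: the per-task risk is a bounded function of the task $T$, the $m$ tasks are i.i.d.\ from $P_0$, and the dependence of $\wpi$'s predictions on $T$ is mediated entirely through the demonstrator's internal representation $Z$ (the Markov chain $T\to X\to Z\to A$ from the graphical model). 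Invoking Lemma~\ref{lem:information_bottleneck_generalization} with mutual information $I_{T;Z}$ yields the $\sqrt{I_{T;Z}|\mathcal{A}|\log(|\mathcal{A}|/\delta)/m}$ term with probability at least $1-\delta$; the $|\mathcal{A}|$ and $\log|\mathcal{A}|$ factors come from the fact that the IB bound of~\cite{kawaguchi2023does} is stated for classification into $|\mathcal{A}|$ labels. A final union bound over the two $\delta$-events gives the claimed $1-2\delta$ confidence.

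The hard part, and the place that needs the most care, is making the information-bottleneck step rigorous in this sequential setting. The bound of~\cite{kawaguchi2023does} is for a single i.i.d.\ classification sample, whereas here the "sample" associated with a task $\theta_i$ is a whole bundle of $nH$ correlated transitions $(\tau_{i1},\dots,\tau_{in})$; one must argue that it suffices to bound the per-task population risk (already marginalizing out the within-task randomness, which is handled separately by Lemma~\ref{lem:bc_sample_complexity}) as a function of $T$ alone, so that the IB bound applies at the granularity of the $m$ i.i.d.\ draws $T\sim P_0$. This requires identifying the right "representation" random variable $Z$ so that $I_{T;Z}$ is the quantity that actually governs how much $\wpi$'s per-task error can vary with $T$, and checking that the data-processing structure $T\to X\to Z\to A$ makes $I_{T;A|\,\text{chunk}}\le I_{T;Z}$ (or the analogous inequality used by the lemma). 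I also expect some bookkeeping to reconcile the $RH$ prefactor's provenance: in Lemma~\ref{lem:bc_generalization} itself there is no $RH$, so the statement is purely about the $0$--$1$ risk, and the $RH$ appears only when this lemma is plugged into the performance-difference chain \eqref{eq:thm_1}--\eqref{eq:thm_4} of the main theorem --- so I would make sure the lemma is proved with the clean normalization ($\mathcal{E}_{opt}$ divided by $mnH$, risks in $[0,1]$) and defer all horizon-dependent scaling to the theorem's proof.
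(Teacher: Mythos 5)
Your proposal matches the paper's proof essentially step for step: the same three-way decomposition into the across-task gap (bounded via the information-bottleneck lemma of Kawaguchi et al.\ at the granularity of the $m$ i.i.d.\ task draws), the within-task gap (bounded via the Foster et al.\ behavioral-cloning sample-complexity lemma with a union bound over the $m$ tasks, yielding the $\log(|\Pi|m/\delta)/n$ term), and the residual $\mathcal{E}_{opt}(\wpi)$, with a final union bound giving confidence $1-2\delta$. Your closing remarks about deferring the $RH$ prefactor to the theorem and about where the i.i.d.\ structure is needed are also consistent with how the paper organizes the argument.
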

\begin{proof}
    We derive the result as follows
    \begin{align}
        &\hspace{-8pt}\EV_{T \sim P_0} \EV_{XA \sim \mathbb{P}^{ \pi^E}_{T}} [\bm{1} ( \wpi (X) \neq A)] \nonumber \\
        &\leq \EV_{T \sim P_0} \EV_{XA \sim \mathbb{P}^{ \pi^E}_{T}} [\bm{1} ( \wpi (X) \neq A)] - \frac{1}{mnH} \sum_{i = 1}^m \sum_{j = 1}^n \sum_{h = 0}^{H - 1} \bm{1} (\wpi (\tau_{ij}^h) \neq a_{ij}^h) + \mathcal{E}_{opt} (\wpi) \label{eq:bc_gen_1} \\
        &\leq \EV_{T \sim P_0} \EV_{XA \sim \mathbb{P}^{ \pi^E}_{T}} [\bm{1} (\wpi (X) \neq A)] - \frac{1}{m} \sum_{i = 1}^m \EV_{XA \sim \mathbb{P}^{ \pi^E}_{\theta_i}} [\bm{1} (\wpi (X) \neq A)] + \frac{8 \log (|\Pi| m/ \delta)}{n} + \mathcal{E}_{opt} (\wpi) \label{eq:bc_gen_2}\\
        &\lesssim \sqrt{\frac{I_{T;Z} |\mathcal{A}| \log (|\mathcal{A}| / \delta)}{m}} + \frac{8 \log (|\Pi| m / \delta)}{n} + \mathcal{E}_{opt} (\wpi)\label{eq:bc_gen_3}
\end{align}
    where \eqref{eq:bc_gen_1} is a trivial consequence of Assumption~\ref{asm:optimization_error} on the optimization error for solving \eqref{eq:bc_optimization_problem}, \eqref{eq:bc_gen_2} holds with probability at least $1 - \delta$ through Lemma~\ref{lem:bc_sample_complexity} and a union bound on the $m$ training tasks, and \eqref{eq:bc_gen_3} holds with probability $1 - 2\delta$ from Lemma~\ref{lem:information_bottleneck_generalization} by omitting constant and lower order terms and applying a union bound.
\end{proof}

\begin{lemma}[Sample complexity of behavioral cloning~\cite{foster2024behavior}]
\label{lem:bc_sample_complexity}
    For a confidence $\delta \in (0,1)$, an MDP $\theta$, and a deterministic expert's policy $\pi^E$, it holds with probability at least $1 - \delta$
    \begin{equation*}
        \EV_{XA \sim \mathbb{P}^{ \pi^E}_{\theta}} [\bm{1} (\wpi (X) \neq A)] \leq \frac{8 \log (|\Pi|/ \delta)}{n}.
    \end{equation*}
\end{lemma}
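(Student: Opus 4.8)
The plan is to establish this as the classical realizable-case generalization bound for a finite hypothesis class, with one adjustment to handle the fact that the per-step prediction errors along a single demonstrated trajectory are correlated. \textbf{First}, I would observe that the log-loss collapses under realizability: since $\pi^E$ is deterministic (Asm.~\ref{asm:expert_deterministic}) and each demonstrated action satisfies $a_{ij}^h = \pi^E(\tau_{ij}^h)$, every summand of $\mathcal{L}$ in \eqref{eq:bc_optimization_problem} evaluated at $\pi^E$ equals $\log 1 = 0$, so $\mathcal{L}(\pi^E)=0$. As $\pi^E\in\Pi$ (Asm.~\ref{asm:realizability}) and $\mathcal{L}$ is a sum of nonnegative terms, the minimizer obeys $\mathcal{L}(\wpi)\le\mathcal{L}(\pi^E)=0$, hence $\mathcal{L}(\wpi)=0$ and each summand vanishes, i.e.\ $\wpi(a_{ij}^h\mid\tau_{ij}^h)=1$. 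Restricting to the $n$ demonstrated trajectories of the MDP $\theta$ at hand, $\wpi$ therefore agrees with $\pi^E$ on every transition of every demonstrated rollout.

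\textbf{Next}, I would reduce the transition-level quantity to a trajectory-level one. Because the expectation $\EV_{XA\sim\mathbb{P}^{\pi^E}_\theta}[\bm{1}(\wpi(X)\neq A)]$ is taken under the expert's own trajectory distribution (not under $\wpi$), there is no compounding or distribution-shift term; and bounding the ``some step'' event by a ``uniformly random step'' gives
\[
\EV_{XA\sim\mathbb{P}^{\pi^E}_\theta}[\bm{1}(\wpi(X)\neq A)] \;\le\; q_{\wpi}\;:=\;\PP_{\tau\sim\mathbb{P}^{\pi^E}_\theta}\big[\exists\,h\le H-1:\ \wpi(\tau^h)\neq\pi^E(\tau^h)\big].
\]
The virtue of $q_\pi$ is that, over the $n$ i.i.d.\ demonstrated trajectories of $\theta$, the events ``$\pi$ mispredicts somewhere along $\tau_j$'' are i.i.d.\ Bernoulli$(q_\pi)$ for each fixed $\pi\in\Pi$ --- something the raw per-transition indicators are not. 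Now apply the realizable-case union bound: for any fixed $\pi\in\Pi$ with $q_\pi>\varepsilon$, the chance it errs on none of the $n$ demonstrated trajectories is at most $(1-\varepsilon)^n\le e^{-n\varepsilon}$, so a union bound over the at most $|\Pi|$ such ``bad'' policies shows that, with probability at least $1-|\Pi|e^{-n\varepsilon}$, every $\pi\in\Pi$ that errs on no demonstrated trajectory has $q_\pi\le\varepsilon$. Choosing $\varepsilon=\log(|\Pi|/\delta)/n$ makes the failure probability $\delta$, and since the first step placed $\wpi$ among the zero-training-error policies, we conclude $\EV_{XA}[\bm{1}(\wpi(X)\neq A)]\le q_{\wpi}\le\log(|\Pi|/\delta)/n$ with probability at least $1-\delta$; the factor $8$ is slack, and it also leaves room for the martingale/Bernstein refinement of \cite{foster2024behavior} that makes the statement robust to only approximate realizability.

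\textbf{The main obstacle} is exactly the within-trajectory dependence of the per-step errors: one cannot treat the $nH$ transitions as $nH$ independent classification examples. Lifting to the trajectory level, where the $n$ demonstrations are genuinely i.i.d.\ while the union over the $H$ steps is absorbed inside the ``$\exists h$'' event, is what keeps the rate at $1/n$ with no stray factor of $H$. A minor bookkeeping point --- needed when this lemma is later applied per task inside Lemma~\ref{lem:bc_generalization} --- is that the minimizer of the full multi-task objective still achieves zero loss on each individual task's data, which is immediate from the additivity and nonnegativity of the log-loss.
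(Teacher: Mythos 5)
Your proof is correct, but it takes a genuinely different route from the paper's. The paper obtains the bound by chaining two results from \cite{foster2024behavior}: Proposition 2.1, which gives $D_H^2(\mathbb{P}^{\wpi}_{\theta}, \mathbb{P}^{\pi^E}_{\theta}) \leq 2\log(|\Pi|/\delta)/n$ for the log-likelihood maximizer over a finite class (a martingale/Chernoff argument on the likelihood ratio), and Lemma F.3, which converts the trajectory-level squared Hellinger distance into the expected $0$-$1$ disagreement under the expert's own distribution for a deterministic expert; the constant $8$ is just the product $2 \times 4$ of the constants in those two results. You instead bypass the Hellinger machinery entirely: you use realizability plus exact minimization of \eqref{eq:bc_optimization_problem} to conclude $\wpi$ has zero training error, lift the per-transition loss to the trajectory-level event $\{\exists h:\ \wpi(\tau^h)\neq\pi^E(\tau^h)\}$ so that the $n$ demonstrations are genuinely i.i.d.\ Bernoulli (this is the right move to handle within-trajectory dependence without paying a factor of $H$), and finish with the classical realizable finite-class union bound. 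Both arguments are sound; yours is more elementary, self-contained, and actually yields the sharper constant $1$ in place of $8$. What the paper's route buys in exchange is robustness: the Hellinger-based analysis of \cite{foster2024behavior} degrades gracefully under a stochastic expert, approximate realizability, and inexact log-loss minimization, which is exactly the extension the paper invokes later (``Overcoming assumptions''), whereas your argument leans essentially on the implication ``zero log-loss $\Rightarrow$ zero $0$-$1$ training error,'' which is brittle once Assumption~\ref{asm:optimization_error} permits $\mathcal{E}_{opt}(\wpi)>0$ (a tension that, to be fair, is already present in the paper's own statement of the lemma). Your closing remark that the multi-task minimizer attains zero loss on each task's data by nonnegativity and additivity is the right bookkeeping for the later union bound over the $m$ tasks in Lemma~\ref{lem:bc_generalization}.
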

\begin{proof}
    This result can be obtained through a combination of results in~\citep{foster2024behavior}. First, for a policy $\wpi$ obtained by minimizing the negative log likelihood of the data, as in~\eqref{eq:bc_optimization_problem}, from Proposition 2.1~\cite{foster2024behavior} we have with probability at least $1 - \delta$ that
    \begin{equation*}
        D_H^2 (\mathbb{P}^{\wpi}_{\theta}, \mathbb{P}^{\pi^E}_{\theta}) \leq \frac{2 \log (|\Pi| / \delta)}{n}
    \end{equation*}
    where $D_H^2 (\mathbb{P}, \mathbb{Q}) = \int \big(\sqrt{d \mathbb{P}} - \sqrt{d \mathbb{Q}} \big)^2$ is the squared Hellinger distance between the probability measures $\mathbb{P}$ and $\mathbb{Q}$. Then, through Lemma F.3~\cite{foster2024behavior} we have
    \begin{equation*}
        \EV_{XA \sim \mathbb{P}^{ \pi^E}_{\theta}} [\bm{1} (\wpi (X) \neq A)] \leq 4 D_H^2 (\mathbb{P}^{\wpi}_{\theta}, \mathbb{P}^{\pi^E}_{\theta}) 
    \end{equation*}
    which concludes the proof.
\end{proof}

\begin{lemma}[Information bottleneck generalization gap~\cite{kawaguchi2023does}]
    For a dataset $E = \{\theta_i \sim P_0\}_{i = 1}^m$ of $m$ tasks and a single-point convex loss $\ell (\wpi(X),A)$, let us define the generalization gap across the prior $P_0$ as
    \begin{equation*}
        \Gamma (E) := \EV_{T \sim P_0} \EV_{XA \sim \mathbb{P}^{ \pi^E}_{T}} [\ell (\wpi (X), A)] - \frac{1}{m} \sum_{i = 1}^m \EV_{XA \sim \mathbb{P}^{ \pi^E}_{\theta_i}} [\ell (\wpi (X), A)].
    \end{equation*}
    For a confidence  $\delta \in (0, 1)$, $\Gamma(E)$ is upper bounded with probability at least $1 - \delta$ by
    \begin{align*}
        \beta \sqrt{\frac{I_{T;Z|A} \log 2 + \alpha_\gamma \log 2 + H(Z | T, A) + \log (2 |\mathcal{A}|/\delta)}{m}} 
        + \frac{f(\wpi) \sqrt{2\gamma |\mathcal{A}| \log (2 |\mathcal{A}|/\delta)}}{m^{3/4}}
        + \frac{\gamma g(\wpi)}{m^{1/2}}
    \end{align*}
    where $\alpha_\gamma, \beta, \gamma$ are constant values, $f(\wpi) = \max_{i \in [m]} \EV_{XA \sim \mathbb{P}^{ \pi^E}_{\theta_i}} [\ell (\wpi (X), A)]$ is the maximum training loss, $g(\wpi) = \sup_{XA} \ell (\wpi (X), A)$ is the maximum generalization loss, and $H(Z|T,A)$ is the entropy of the demonstrator internal representation given $TA$.
    \label{lem:information_bottleneck_generalization}
\end{lemma}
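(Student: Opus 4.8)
This lemma is an instantiation of the information-bottleneck generalization bound of Kawaguchi et al.~\cite{kawaguchi2023does}, so the plan is essentially a translation of their theorem into our multi-task imitation setting, together with a check that their hypotheses hold. Their result controls the gap between population and empirical risk of a predictor whose dependence on the training sample is mediated by a (possibly stochastic) internal representation, in terms of the mutual information between that representation and the data, a residual-entropy correction, and lower-order terms. The first step is to fix the correspondence of objects: the $m$ tasks $\theta_1, \dots, \theta_m$ drawn i.i.d.\ from $P_0$ play the role of the $m$ training examples; the input is the task variable $T$ (equivalently the observation $X$, drawn conditionally on $T$); the internal representation is the demonstrator's recurrent state $Z$; the label is the demonstrated action $A$; and the single-point loss is $\ell(\wpi(X),A)$, convex in its first argument and bounded by hypothesis (the indicator loss used in Lemma~\ref{lem:bc_generalization} being the special case with $g(\wpi)\le 1$). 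The relevant stochastic chain is $T \to X \to Z \to A$, which places $Z$ between the data-generating task and the label, exactly the bottleneck structure their theorem requires.

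With this dictionary, the second step is to verify the remaining hypotheses and read off the constants: the triples $(T,X,Z,A)$ are i.i.d.\ across tasks; the output alphabet is the finite action set $\mathcal{A}$, which produces the $|\mathcal{A}|$ and $\log(2|\mathcal{A}|/\delta)$ factors (their bound being stated for finite-output problems); $f(\wpi)$ is the largest per-task training loss and $g(\wpi)$ the worst-case loss. Invoking their theorem and renaming the absolute constants from their argument as $\alpha_\gamma,\beta,\gamma$ (with $\gamma$ the free balancing parameter in their PAC-Bayes-style proof) then yields the three summands with exponents $m^{-1/2}$, $m^{-3/4}$, $m^{-1/2}$; the conditional forms $I_{T;Z|A}$ and $H(Z|T,A)$ appear because in our chain $A$ is a downstream function of $Z$, so what is relevant is the task information the representation carries \emph{beyond} what the action already reveals.

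The step I expect to be the real obstacle is structural, not computational. In the original supervised-learning statement, $Z$ is part of the \emph{learned} model and is the channel through which the \emph{learner}'s dependence on the sample flows; here $Z$ is the internal state of the \emph{data-generating} demonstrator, while the cloned policy $\wpi$ is a separate object fit to the logged trajectories. The reconciliation is that $\wpi$ only ever sees task $\theta_i$ through the demonstrated actions $a_{ij}^h$, and each such action is produced from the corresponding $Z$; hence the dependence on the sampled tasks of every quantity entering $\Gamma(E)$ factors through the demonstrator representations, which is precisely the data-processing property their proof exploits. Making this factorization explicit --- checking that conditioning on the per-task representations renders the per-task risks exchangeable in the way their argument needs --- is where care is required; once it is in place the bound follows by substitution, and the downstream simplification of $I_{T;Z|A}+H(Z|T,A)+\dots$ to the cleaner $I_{T;Z}$ term (up to $\log$ factors and a union bound) is carried out separately, as in the proof of Lemma~\ref{lem:bc_generalization}.
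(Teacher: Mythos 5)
Your proposal takes essentially the same route as the paper: the paper's proof is likewise a direct invocation of Theorem~1 of Kawaguchi et al.\ under the dictionary (task $\to$ training sample, demonstrator's recurrent state $\to$ bottleneck representation, action $\to$ label, finite $\mathcal{A}$ $\to$ finite output alphabet), followed by reading off $\alpha_\gamma,\beta,\gamma$, $f(\wpi)$, and $g(\wpi)$. The structural subtlety you flag --- that $Z$ is the internal state of the \emph{data-generating} demonstrator rather than of the learned predictor $\wpi$ --- is a legitimate concern, but the paper does not resolve it either: it simply states that the notation is ``adapted to our setting of interest'' and defers all derivations to the cited work, so your sketch is, if anything, more explicit about where the instantiation needs care.
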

\begin{proof}
    This result is based on Theorem 1 in~\citep{kawaguchi2023does}, in which the notation adapted to our setting of interest. All of the derivations can be found in~\citep{kawaguchi2023does}. 
    A more coarse version of the bound is given as
    \begin{equation*}
        \Delta (E) \leq \bigg( \beta \sqrt{\alpha_\gamma \log 2 + H (Z | T, A)} + f(\wpi) \sqrt{2\gamma} + \gamma g(\wpi) \bigg) \sqrt{\frac{I_{T;Z|A} |\mathcal{A}| \log (2|\mathcal{A}|/\delta)}{m}}
    \end{equation*}
    where the first factor can be incorporated into a constant $C (E, \pi^E, \wpi)$.
    We note that the term $H(Z|T,A) = 0$ whenever the demonstrator internal representation is deterministic, which is a fair assumption in our setting. Further, the maximum training error $f(\wpi)$ is close to zero and upper bounded by the optimization error $\mathcal{E}_{opt} (\wpi)$. The value of $g(\wpi)$ is upper bounded by $1$ for the indicator loss $\ell (\wpi (X), A) = \bm{1} (\wpi (X) \neq A)$. Finally, we note that $I_{T;Z} \geq I_{T;Z|A}$. With these considerations, by omitting all of the constants, we have
    \begin{equation*}
        \Delta (E) \lesssim \sqrt{\frac{I_{T;Z} |\mathcal{A}| \log (|\mathcal{A}|/\delta)}{m}} 
    \end{equation*}
    as it is reported elsewhere in the paper.
\end{proof}

\clearpage

\section{Peg insertion extended results}
\label{apx:peg_insetion}
This section describes in detail the setting, hyperparameters, and constants used for the peg insertion task, as well as provides extended evaluation results.
 
\subsection{Experimental setup}
Figure \ref{fig:all_shapes} shows a close-up view of all peg shapes (10 shapes in total) and their corresponding boards, where the training shapes are in the top row and the test shapes are in the bottom row.
Each peg insertion attempt starts from $10cm$ above the hole (Z axis) and a random reset position within a $0.5 cm$ box in the XY plane, centered above the hole position. In this experiment, initial rotations about the X and Y axes are fixed (0 degrees), while the Z angle starts from a random rotation ranging from $-60$ to $60$ degrees. Two Realsense cameras are mounted on the robot's wrist. For the blindfolded expert experiment, we mask out the hole to hide its orientation from the experts, such that they cannot infer the orientation of the peg and must explore the domain in order to insert the peg.

\begin{figure}[ht]
  \centering
   \includegraphics[scale=0.42]{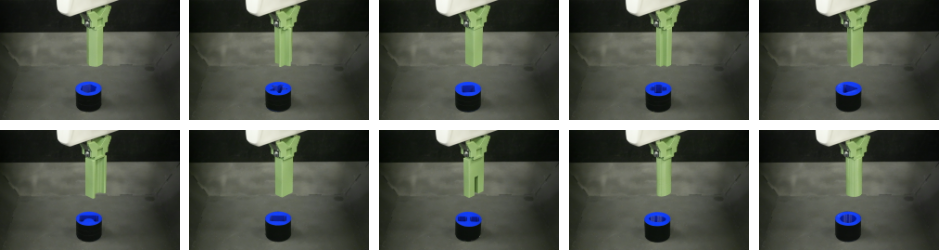}
  \caption{Close-up view of all peg insertion tasks. Top row: Training pegs. Bottom row: Test pegs.}
  \label{fig:all_shapes}
\end{figure}

\subsection{Hyperparameters and constants}

As described in section \ref{sec:experiments_peg_insertion}, the same architecture is used for learning both $\pi_{BC}$ and $\pi_{BF-BC}$. Specifically, we use ResNet-10 \cite{he2016deep} encoder pretrained on the ImageNet dataset \cite{deng2009imagenet}, and a GRU of $1024$, which we found to produce the best performance for both policies $\pi_{BC}$ and $\pi_{BF-BC}$ independently. Throughout our experiments, we train our networks using the Adam optimizer. The hyperparameters of the networks are the learning rate, learning rate decay, and the batch size. To ensure that the best performance of each approach is achieved, we perform a separate hyperparameter search for each policy, trained on each subset of shapes. The best hyperparameters are listed in Table \ref{tab:peg_hyperparam}. The network outputs a 6-dimensional vector for the mean and a 6-dimensional vector for the diagonal covariance matrix of a Gaussian policy (for 6-DoF action space). We train our networks using the log-likelihood loss. In all our evaluations, the action is chosen as the maximum likelihood of the distribution.

\begin{table}[htbp]
  \centering
  \caption{List of hyperparameters used in the peg insertion experiment. The learning rate (lr) schedule indicates the iteration number for multiplying the lr by $0.5$.}
  \label{tab:peg_hyperparam}
  \begin{tabular}{|c|c|c|}
    \hline
    \textbf{Hyperparameter} & \textbf{$\pi_{BC}$} & \textbf{$\pi_{BF-BC}$}\\
    \hline
    batch size & 1024 &  1024\\
    \hline
    hidden size & 1024 & 1024\\
    \hline
    initial lr & 0.0003 &  0.0003\\
    \hline
    lr schedule & lr $\times 0.5$ at $\{10,100,150,200\} K$ & lr $\times 0.5$ at $\{50,100,150,200\} K$ \\
    \hline
  \end{tabular}
\end{table}
 %

\subsection{Measuring exploratory behavior}
We compute two additional measures for the exploratory behavior of the different experts: the map coverage score (Table \ref{tab:map_coverage}) and the entropy of state visitation (Table \ref{tab:state_entropy}).

\textbf{Map coverage score} is the ratio $C=N_v/N_{total}$ given by the number of visited states $N_v$ divided by all accessible states $N_{total}$, averaged over all episodes. For the peg insertion experiment, we consider the rotation of the robotic arm around the Z-axis as the crucial component of the state space for obtaining the correct articulation for insertion. We compute the ratio of the rotation performed (in radians) divided by $2\pi$ in each trajectory, averaged over all trajectories.

\textbf{The entropy of state visitation} is defined by $H= -\sum_s p(s)\log p(s)$. We calculate $p(s)$ using a histogram (with 20 bins) of rotation angles along the trajectory, averaged over all trajectories.

The results confirm that blindfolded experts explore a larger portion of the state space to compensate for the redacted information in the observations.

\begin{table}[h]
\centering
\caption{Map coverage score of the trajectories demonstrated by fully-informed experts (Experts) and blindfolded experts (BF-Experts).}
\begin{tabular}{lcccccc}
\hline
Mode & hexagon & star & square & plus & triangle & Average \\
\hline
Experts & 0.078 & 0.113 & 0.150 & 0.153 & 0.191 & 0.137 \\
BF-Experts & 0.114 & 0.259 & 0.267 & 0.270 & 0.327 & 0.247 \\
\hline
\end{tabular}
\label{tab:map_coverage}
\end{table}
\begin{table}[h]
\centering
\caption{Entropy of the state visitation of the trajectories demonstrated by fully-informed experts (Experts) and blindfolded experts (BF-Experts).}
\begin{tabular}{lcccccc}
\hline
Mode & hexagon & star & square & plus & triangle & Average \\
\hline
Experts & 2.876 & 3.121 & 3.266 & 3.228 & 3.418 & 3.182 \\
BF-Experts & 3.100 & 3.416 & 3.546 & 3.577 & 3.631 & 3.454 \\
\hline
\end{tabular}
\label{tab:state_entropy}
\end{table}

\subsection{Results for different combinations of training shapes}
Figure \ref{fig:peg_insertion_results_3_4} shows the success rate for $k=3,4$ training shapes (and the rest serve as a test set, out of a total of $10$ peg shapes). The results on the varying amounts of training shapes, further support that cloning blindfolded experts generalizes better than the standard BC approach.  

\begin{figure}[t]
 \begin{subfigure}{0.49\textwidth}
     \includegraphics[width=\textwidth]{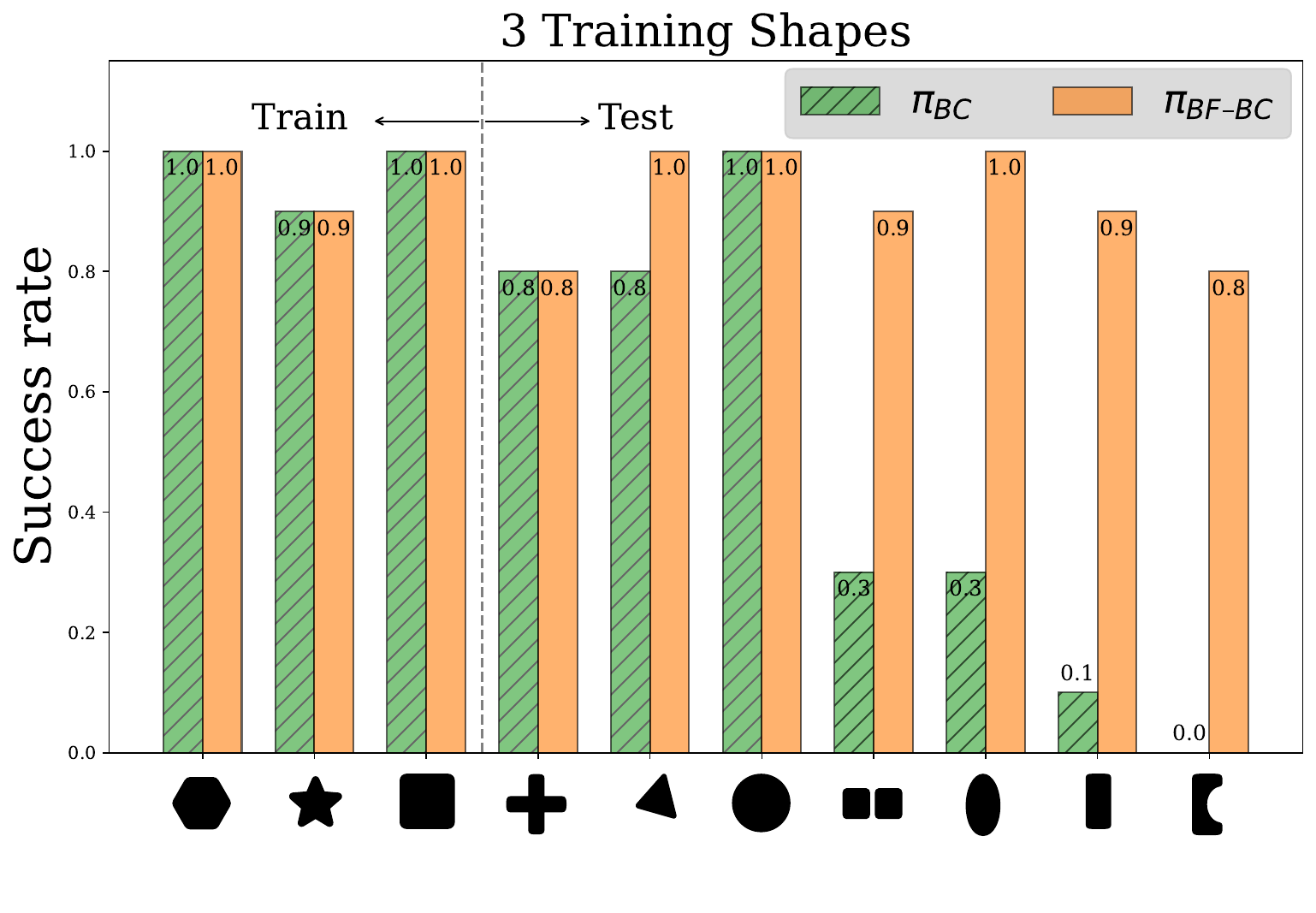}
 \end{subfigure}
 \begin{subfigure}{0.49\textwidth}
     \includegraphics[width=\textwidth]{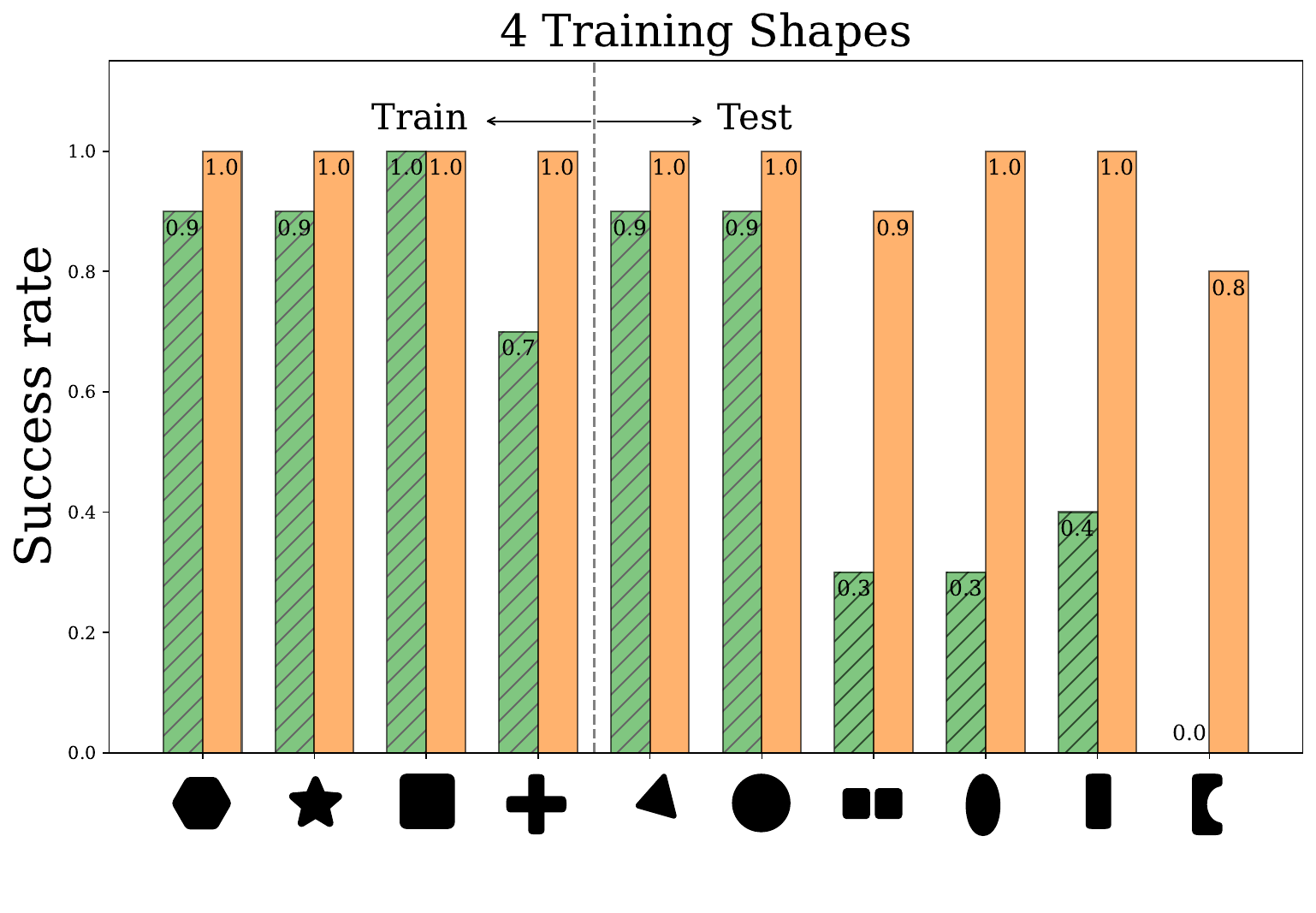}
 \end{subfigure}
 \vspace{-10pt}
\caption{Success rate of robotic peg insertion for $10$ peg shapes (horizontal axis). We train on a subset of shapes $k=\{3,4\}$, with the remaining shapes withheld as test-set. Both for $k=3$ (left) and $k=4$ (right), cloning blindfolded experts generalizes better than cloning standard experts.}
\label{fig:peg_insertion_results_3_4}
\vspace{-1pt}
\end{figure}

\section{Procgen maze and heist extended results}
\label{apx:maze_ext}


This section describes in detail the hyperparameters and constants used for the Procgen maze task.

\subsection{Hyperparameters and constants}
For a fair comparison, we conduct a separate hyperparameter search for both $\pi_{BC}$ and $\pi_{BF-BC}$. We perform a hyperparameter search for the batch size $b\in \{128, 256, 512, 1024\}$, for the learning rate $\mathrm{lr}\in \{1e^{-3}, 1e^{-4}, 1e^{-5}, 5e^{-3}, 5e^{-4}, 5e^{-5}\}$ and for hidden size $h\in \{128, 256, 512, 1024\}$. We also evaluate the performance with and without using a learning decay schedule. Our networks are trained using the Adam optimizer. The best hyperparameters are chosen based on the lowest training loss and the highest training success rate. We evaluate performance over an average of $10$ different random training seeds. Figure \ref{fig:loss} and Table \ref{tab:maze_hyperparam} show the loss function and the chosen hyperparameters. Note that in most cases, the best hyperparameters for $\pi_{BC}$ and $\pi_{BF-BC}$ turned out to be fairly similar. 
\begin{figure}[t]
\centering
 \begin{subfigure}{0.49\textwidth}
     \includegraphics[width=1.\textwidth]{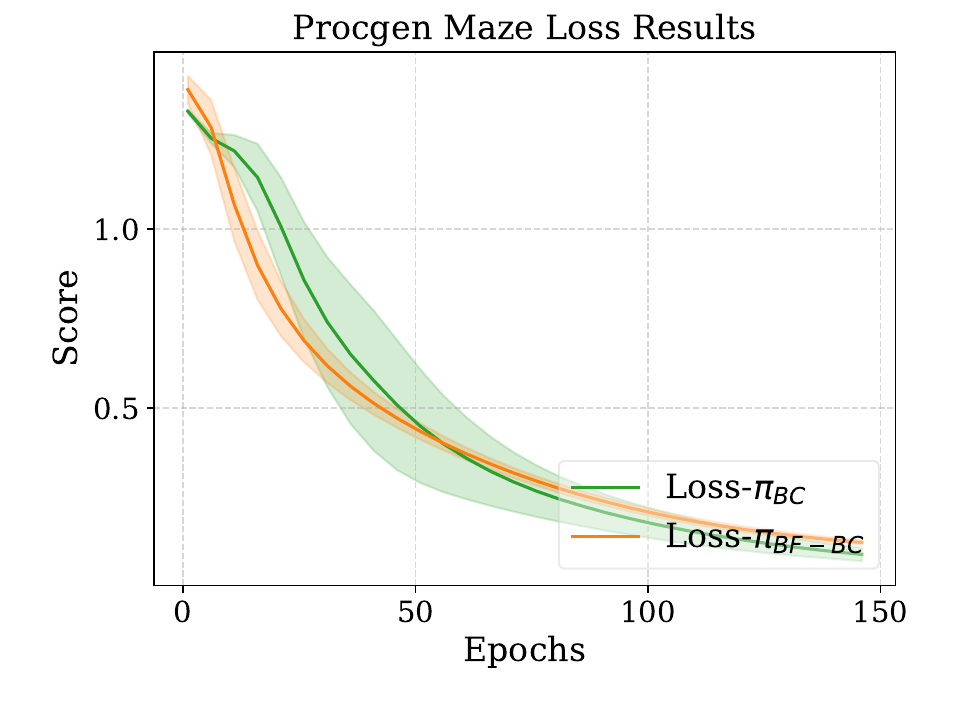}
     \label{fig:maze_loss}
 \end{subfigure}
 \begin{subfigure}{0.49\textwidth}
     \includegraphics[width=1.\textwidth]{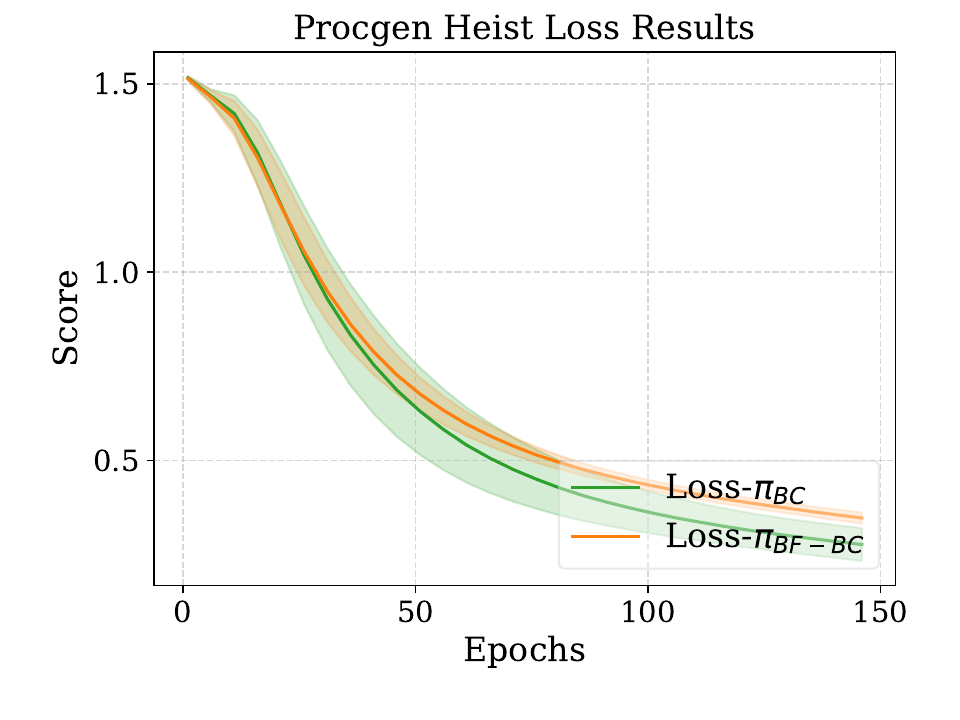}
     \label{fig:heist_loss}
 \end{subfigure}
 \vspace{-10pt}
\caption{Loss as a function of training epochs for the Procgen maze (left) and heist (right). Mean (color line) and std (shaded region) are computed across 10 seeds.}
\label{fig:loss}
\end{figure}
\begin{table}[ht]
  \centering
  \caption{List of hyperparameters used in the Procgen Maze experiment.}
  \label{tab:maze_hyperparam}
  \begin{tabular}{|c|c|c|}
    \hline
    \textbf{Hyperparameter} & \textbf{$\pi_{BC}$} & \textbf{$\pi_{BF-BC}$}\\
    \hline
    batch size & 256  & 256\\
    \hline
    hidden size & 1024  & 1024\\
    \hline
    learning rate & $10^{-5}$  & $10^{-5}$\\
    \hline
    learning rate schedule?& No & No\\
    \hline
  \end{tabular}
\end{table}
\begin{table}[ht]
  \centering
  \caption{List of hyperparameters used in the Procgen Heist experiment.}
  \label{tab:heist_hyperparam}
  \begin{tabular}{|c|c|c|}
    \hline
    \textbf{Hyperparameter} & \textbf{$\pi_{BC}$}  & \textbf{$\pi_{BF-BC}$}\\
    \hline
    batch size & 128  & 128\\
    \hline
    hidden size & 512  & 1024\\
    \hline
    learning rate & $10^{-5}$  & $10^{-5}$\\
    \hline
    learning rate schedule?& No  & No\\
    \hline
  \end{tabular}
\end{table}
\subsection{Number of steps vs. number of trajectories}
As described in Table \ref{tab:number_of_steps_traning}, the blindfolded expert takes more steps on average to complete each trajectory. When comparing the different approaches, we match the number of trajectories, which leads to a greater total number of environment steps for the blindfolded expert. Table \ref{tab:maze_comparison} shows the total number of steps available for training the different BC policies, alongside their performance. We also compare our results to a standard BC approach with twice the number of trajectories (from the same $100$ seeds) to match the number of environment steps produced by the blindfolded expert. In addition, we compare our results to the results reported by \cite{mediratta2023generalization}, who train a BC policy on the Procgen maze with a dataset of $1M$ environment steps taken from a trained PPO expert, on $200$ training seeds\footnote{For $1M$ expert dataset, we report the results from \cite{mediratta2023generalization} who evaluated over $5$ seeds.}.  

\begin{table}[ht]
\centering
\caption{Performance comparison on the Procgen maze experiment. Our results are reported at epoch 40 (early stopping) when training performance plateaus. The mean and std are computed over 10 seeds. Top performer in bold.}
\label{tab:maze_comparison}
\begin{tabular}{||l c c c c||} 
 \hline
 Parameter & 1M Expert Dataset in \cite{mediratta2023generalization} & $\pi_{BC}$ & $\pi_{BC-ext}$ & $\pi_{BF-BC}$\\ [0.5ex] 
 \hline\hline
$\#$ of trajectories & $15385$ & $2000$ & $4608$ & $2000$ \\ 
$\#$ of total env steps & $1000000$ & $57166$ & $115290$ & $103238$\\
$\#$ of seeds & $200$ & $100$ & $100$ & $100$\\
 \hline 
 Test performance & $4.46 \pm 0.16$ & $3.35 \pm 0.29$ & $3.65 \pm 0.3$ & $\bf{6.37 \pm 0.47}$ \\  [1ex] 
 \hline 
\end{tabular}
\end{table}

We can see in Table \ref{tab:maze_comparison} that $\pi_{BF-BC}$ achieves better performance than all other contending policies. When compared to the results reported in \cite{mediratta2023generalization}, we can see that our results are better despite significantly less training data (an order of magnitude fewer trajectories) and half the number of training seeds.

\section{Data collection}
\label{apx:datacollection_compensation}


\paragraph{Peg insertion.}~~ 
For both $\pi_{BC}$ and $\pi_{BF-BC}$, we use $400$ trajectories for each of the training shapes. We, the authors, collected the data by operating the robot manually using a Spacemouse control. Recall that the blindfolded expert observes a masked-out view of the board (through the robot wrist cameras) such that the orientation of the peg is not directly visible and must be inferred through exploration. However, recorded observations in favor of cloning the blindfolded policy $\pi_{BF-BC}$ are unmasked, i.e., only the human expert is blindfolded. In addition, we rescale the images from $480\times480$ to $128\times128$ for both $\pi_{BC}$ and $\pi_{BF-BC}$  to facilitate computations. 

\paragraph{Procgen maze and heist.}~~ 

To train our experts (BC) and blindfolded experts (BF-BC) policies, we collected $4000$ human demonstrations on $200$ levels. We conducted crowd-sourced data collection for the maze and heist videogames by recruiting $20$ volunteers who played the games. Each expert played all $200$ levels of maze and all $200$ levels of heist twice, once with the mask and once without the mask. Their game trajectories were recorded to serve toward the imitation-learning of the experts' policy $\pi_{BC}$ and blindfolded experts' policy $\pi_{BF-BC}$.
The participants moved the mouse using the keyboard's arrow keys and relied on the Procgen ``interactive'' GUI for maze and heist observations in full resolution ($512\times512$). For the blindfolded experts' data, their observations are modified to reveal only the agent's immediate surroundings (a diameter of $\frac{1}{8}$ of the width for maze and $\frac{1}{6}$ for heist) with the rest of the observation masked out. Note that the state observations that are provided to the cloning networks are a lower resolution of $64\times64$ of the unmasked observations.

All participants were compensated with vouchers for their efforts. The experiment's GUI environment, alongside the training code and the recorded data, are available at:\\ \url{https://github.com/EvZissel/blindfolded-experts/}

\end{document}